\newtheorem{lem}{Lemma}
\theoremstyle{definition} \newtheorem{defn}{Definition}
\newcommand{\indepe}{\mathop{\perp\!\!\!\perp}}
\newcommand{\notindepe}{\mathop{\perp\!\!\!\!\!\!/\!\!\!\!\!\!\perp}}
\newcommand\reallywidehat[1]{%
\savestack{\tmpbox}{\stretchto{%
  \scaleto{%
    \scalerel*[\widthof{\ensuremath{#1}}]{\kern-.6pt\bigwedge\kern-.6pt}%
    {\rule[-\textheight/2]{1ex}{\textheight}}%WIDTH-LIMITED BIG WEDGE
  }{\textheight}% 
}{0.5ex}}%
\stackon[1pt]{#1}{\tmpbox}%
}
\def\rightarrowCirc{\hbox{$\circ$}\kern-1.5pt\hbox{$\rightarrow$}}
\def\circHyphenCirc{\hbox{$\circ$}\kern-1.5pt\hbox{$-$}\kern-1.5pt\hbox{$\circ$}}
\def\circHyphen{\hbox{$\circ$}\kern-1.5pt\hbox{$-$}}
\theoremstyle{definition} \newtheorem{assumption}{Assumption}
\def\rightarrowCirc{\hbox{$\circ$}\kern-1.5pt\hbox{$\rightarrow$}}
\def\circHyphenCirc{\hbox{$\circ$}\kern-1.5pt\hbox{$-$}\kern-1.5pt\hbox{$\circ$}}
\def\circHyphen{\hbox{$\circ$}\kern-1.5pt\hbox{$-$}}
\begin{document}

\title{Use of Prior Knowledge to Discover Causal Additive Models with Unobserved Variables and its Application to Time Series Data
%\thanks{Grants or other notes
%about the article that should go on the front page should be
%placed here. General acknowledgments should be placed at the end of the article.}
}
%\subtitle{Do you have a subtitle?\\ If so, write it here}

%\titlerunning{Short form of title}        % if too long for running head

\author{Takashi Nicholas Maeda, Shohei Shimizu
}

%\authorrunning{Short form of author list} % if too long for running head

\institute{Tokyo Denki University \email{tn.maeda@mail.dendai.ac.jp}           %  \\
%             \emph{Present address:} of F. Author  %  if needed
%           \and
%           S. Author \at
%              second address
}

\date{Received: date / Accepted: date}
% The correct dates will be entered by the editor

\maketitle

\begin{abstract}
This paper proposes two methods for causal additive models with unobserved variables (CAM-UV). CAM-UV assumes that the causal functions take the form of generalized additive models and that latent confounders are present. First, we propose a method that leverages prior knowledge for efficient causal discovery. Then, we propose an extension of this method for inferring causality in time series data. The original CAM-UV algorithm differs from other existing causal function models in that it does not seek the causal order between observed variables, but rather aims to identify the causes for each observed variable. Therefore, the first proposed method in this paper utilizes prior knowledge, such as understanding that certain variables cannot be causes of specific others. Moreover, by incorporating the prior knowledge that causes precedes their effects in time, we extend the first algorithm to the second method for causal discovery in time series data. We validate the first proposed method by using simulated data to demonstrate that the accuracy of causal discovery increases as more prior knowledge is accumulated. Additionally, we test the second proposed method by comparing it with existing time series causal discovery methods, using both simulated data and real-world data.

\keywords{Causal discovery \and Latent confounders \and Time-series data \and Causal additive models}
% \PACS{PACS code1 \and PACS code2 \and more}
%\subclass{MSC code1 \and MSC code2 \and more}
\end{abstract}

\section{Introduction}
\label{intro}
Causal discovery refers to a special class of statistical and machine learning methods that infer causal relationships. These studies propose inferential methods deductively derived from assumptions about the data generation process, and the methods enable us to create causal graphs between observed variables without additional experiments. The assumptions of existing causal methods include acyclicity of causal graphs, absence of latent confounders, and independence and identical distribution of exogenous variables~\cite{Spirtes91,shimizu2006,shimizu2011,peters2014,NEURIPS2018_e347c514}. The methods have been applied to various types of data including economic data~\cite{LAI2015173}, meteorological data~\cite{ebert2012}, fMRI data~\cite{SMITH2011875}.

This paper proposes a causal discovery method for time-series data assuming the presence of latent confounders. Most existing methods for time-series data assume the absence of a latent confounder~\cite{JMLR:v9:chu08a,JMLR:v11:hyvarinen10a}. However, most data do not satisfy such assumption. A causal discovery method for time-series data, latent Peter-Clark momentary conditional independence (LPCMCI)~\cite{NEURIPS2020_94e70705}, assumes the presence of latent confounders. However, since LPCMCI is a constraint-based method, it cannot distinguish causal structures that entail the same set of conditional independence between variables. This paper aims to propose a causal functional model-based method for time-series data assuming the presence of latent confounders. We extend the causal additive models with unobserved variables (CAM-UV) algorithm~\cite{pmlr-v161-maeda21a, maeda2021discovery} to propose time-series CAM-UV (TS-CAM-UV), a method for causal discovery from time-series data with latent confounders. The original CAM-UV algorithm assumes that: (1) data are independently and identically distributed, (2) causal functions take the form of a generalized additive model of nonlinear functions, and (3) latent confounders are present. TS-CAM-UV, being a causal function model-based method, can identify causal relationships, provided the data fulfills its assumptions.

Causal discovery methods for time-series data represent the state of variable $X_i$ at time point $t$ as $X_i^t$ treating the states of $X_i$ at different points such as $X_i^t, X_i^{t-1}, \cdots, X_i^{s}$ as separate variables. This allows for representing causal relationships between variables at different time points.

Time series causal discovery methods can be described as causal discovery methods that utilize the prior knowledge that effects do not precede their causes in time. Therefore, before proposing the TS-CAM-UV algorithm, this paper proposes a method called CAM-UV with prior knowledge (CAM-UV-PK), which applies prior knowledge to CAM-UV. TS-CAM-UV is proposed as a method that introduces the knowledge that variables representing future states cannot be the cause of variables representing past states. To the best of our knowledge, this is the first method for time series causal discovery that adopts a causal function model approach assuming the presence of latent confounders. 

The contributions of this paper are as follows:
\begin{itemize}
	\item This paper proposes a method called the CAM-UV-PK algorithm, which can introduce prior knowledge in the form of statements such as $X_i$ cannot be a cause of $X_j$. The performance of the CAM-UV-PK algorithm is verified using simulation data.
	\item We propose a time-series causal discovery method called the TS-CAM-UV algorithm, which applies the prior knowledge that variables representing future states cannot be causes of variables representing past states. The performance of the TS-CAM-UV algorithm is verified using both simulation data and real-world data.
\end{itemize}

The remainder of this paper comprises the following. Section~2 reviews previous studies on causal discovery methods for i.i.d. data and time-series data. Section~3 introduces the models of the data generation processes of CAM-UV and TS-CAM-UV, followed by Section~4 which shows the identifiability of those models. Section~5 introduces the two proposed methods, the CAM-UV-PK algorithm and the TS-CAM-UV algorithm. Section~6 shows and discusses the results of the experiments of the proposed methods. Section~7 brings the paper to a conclusion.

\section{Related studies}
\label{related}
Causal discovery methods often assume that the causal structures form directed acyclic graphs (DAGs), that there is no latent confounders, and that data are independently and identically distributed~\cite{chickering2002,peters2014,shimizu2006,shimizu2011,Spirtes91}. The constrained-based methods including the Peter-Clark (PC) algorithm~\cite{Spirtes91} and the fast causal inference (FCI) algorithm~\cite{fci} infer causal relationships on the basis of conditional independence in the joint distribution. FCI identifies the presence of latent confounders whereas PC assumes the absence of unobserved common causes. PC and FCI cannot distinguish between the two causal graphs that entail exactly the same sets of conditional independence. Compared to constrained-based methods, causal functional model-based methods can identify the entire causal models under proper assumptions. Linear non-Gaussian acyclic models (LiNGAM)~\cite{shimizu2006,shimizu2011} assume that causal relationships are linear and the external effects are non-Gaussian. Additive noise models (ANMs) and causal additive models~\cite{peters2014} assume the causal relationships are nonlinear. Both LiNGAM and ANMs assume the absence of unobserved variables. Causal additive models with unobserved variables (CAM-UV)~\cite{pmlr-v161-maeda21a} are extended models of causal additive models (CAMs)~\cite{buhlmann2014} and assume that the causal functions take the form of generalized additive models (GAMs)~\cite{hastie1990generalized} and that unobserved variables are present.\par
Time-series causal discovery methods have been proposed as extensions of the above methods. The time-series FCI (tsFCI) algorithm~\cite{entner2010causal} and a structural vector autoregression FCI (SVAR-FCI)~\cite{pmlr-v92-malinsky18a} adapt FCI algorithm and use time order and stationarity to infer causal relationships. VAR-LiNGAM~\cite{JMLR:v11:hyvarinen10a} is based on LiNGAM and assumes the linearity of causal relationships, non-Gaussianity of external effects, and the absence of unobserved common causes. Time series models with independent noise (TiMINo)~\cite{NIPS2013_47d1e990} adapts ANMs, and it assumes the absence of latent confounders. The Peter-Clark momentary conditional independence (PCMCI) algorithm~\cite{doi:10.1126/sciadv.aau4996} is an adaptation of the conditional independence-based PC algorithm that addresses strong autocorrelations in time series via the use of a momentary conditional independence (MCI) test. Latent PCMCI (LPCMCI)~\cite{NEURIPS2020_94e70705} is an extension of PCMCI to include unobserved variables. However, to the best of our knowledge, no causal functional model-based method has been proposed for time-series data under the assumption that causal relationships are nonlinear and latent confounders are present.

\section{Models}
\subsection{CAM-UV: Causal additive models with unobserved variables}\label{section:modelCAM-UV}

Causal additive noise models with unobserved variables (CAM-UV) \cite{pmlr-v161-maeda21a, maeda2021discovery} are defined as the equation below:
\begin{equation}
	\label{eq:4}
    V_i=\sum_{X_j \in opa(V_i)}f_{i,j}(X_j) + \sum_{u_j \in upa(U_i)}f_{i,j}(U_j) + N_i,
\end{equation}
where $V=\{V_i\}$ is the set of observed or unobserved variables, $X=\{X_i\}$ the set of observed variables, $U=\{U_i\}$ is the set of unobserved variables, $N_i$ is the external effect on $V_i$, $opa(V_i)\subset X$ is the set of observed direct causes ({\it observed parents}) of $V_i$, $upa(V_i)\subset U$ is the set of unobserved direct causes ({\it unobserved parents}) of $V_i$, and $f_{i,j}$ is a nonlinear function. Additionally, Assumption~\ref{assumption:as1} is imposed on CAM-UV.

\begin{assumption}
	\label{assumption:as1}
All the causal functions and the external effects in CAM-UV satisfy the following condition: If variables $V_i$ and $V_j$ have terms involving functions of the same external effect $N_k$ , then $V_i$ and $V_j$ are mutually dependent (i.e., $(N_k\notindepe V_i)\land (N_k\notindepe V_j)\Rightarrow (V_i \notindepe V_j ) $). 
\end{assumption}

\subsection{TS-CAM-UV: Time series causal additive models with unobserved variables}\label{section:tsmodel}

Time-series causal additive noise models with unobserved variables (TS-CAM-UV) are stationary discrete-time structural causal models that can be described as below:

\begin{equation}
	\label{eq:3}
    V_i^t=\sum_{X_{j}^s \in opa(V_i^t)}f_{i,t,j,s}(X_{j}^s) + \sum_{U_{j}^s \in upa(U_i^t)}f_{i,t,j,s}(U_{j}^s) + N_i^t\ \ \ {\rm with}\ i=1,\dots,m,
\end{equation}
where $t$ and $s$ are time indices, $m$ is a natural number, $V=\{V_i^t\}$ is the set of observed or unobserved variables, $X=\{X_i^t\}$ is the set of observed variables, $U=\{U_i^t\}$ is the set of unobserved variables, $f_{i,t,j,s}$ is a nonlinear function, the noise variables $N_i^t$ are jointly independent, $opa(V_i^t)\subset X$ is the set of direct causes of $V_j^t$, and $upa(V_i^t)\subset U$ is the set of direct causes of $V_j^t$.\par
The stationarity of time-series causal relationships is assumed as the following: The causal relationship of the variable pair $(V^{t-\epsilon}_i,V^t_j)$ is the same as that of all the time shifted pairs $(V^{t^{\prime}-\epsilon}_i,V^{t^{\prime}}_j)$. The causal effect of $V^s_j$ on $V^t_i$ is called a lagged effect if $s < t$ holds, and is called a contemporaneous effect if $t=s$ holds. It is also assumed that there is a natural number $r$ as the maximum time lag such that the longest time lag of the direct causal effects does not exceed $r$. While it is true that the cause precedes the effect in time, if the time slice of the data analyzed are not sufficiently short, the cause and effect may appear to occur simultaneously. This type of causal effect, where the time difference between the cause and effect is shorter than the time slice of data, is referred to as a contemporaneous effect.

\begin{figure}[t]
\centering
\includegraphics[width=8.3cm]{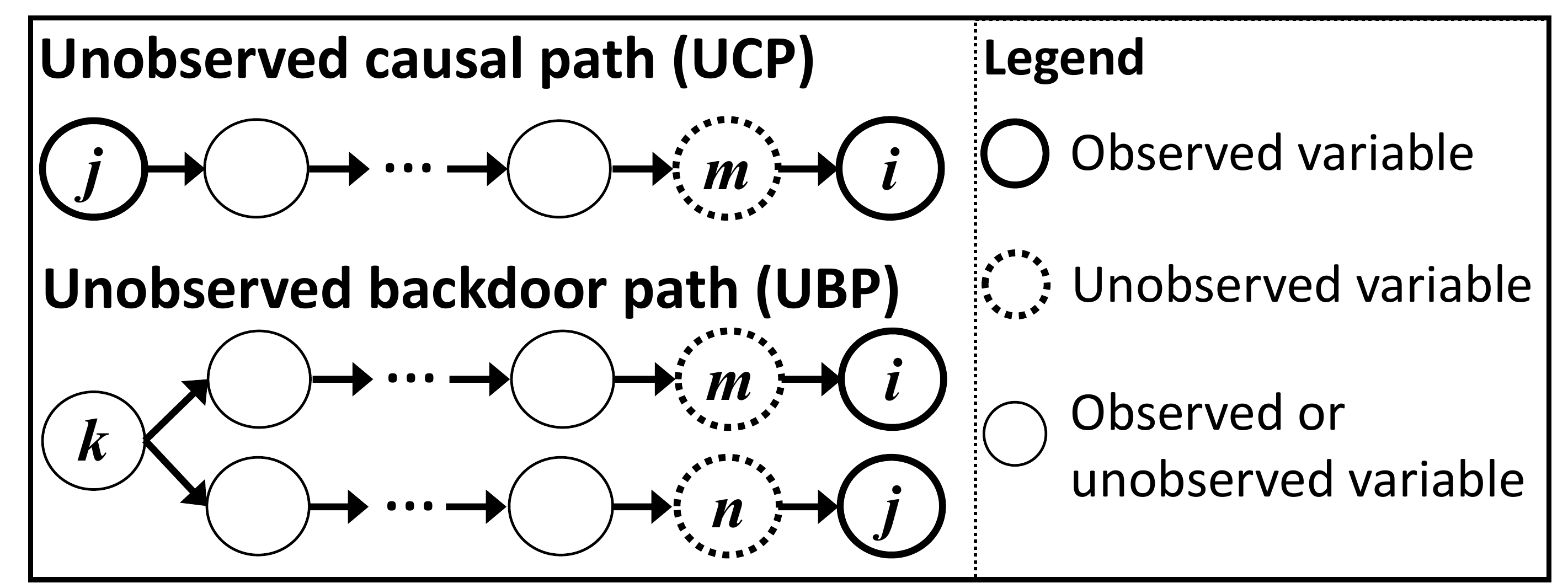}
\caption{Definitions of an unobserved causal path (UCP) and an unobserved backdoor path (UBP).}
\label{figure:path}
\end{figure}

\section{Identifiability}
\subsection{CAM-UV}
The identifiability of CAM-UV is discussed in \cite{pmlr-v161-maeda21a, maeda2021discovery}, and this section briefly presents it. When the causal relationship is linear, an observed variable $X_j$ being an indirect cause of an observed variable $X_i$, even if there is an unobserved variable $U_k$ in the causal path such that the causal relationship is $X_j\rightarrow U_k\rightarrow X_i$, the residual when regressing $X_i$ on $X_j$ becomes independent of $X_j$. However, in the case of a non-linear causal relationship, the residual when regressing $X_i$ on $X_j$ cannot be independent of $X_j$. This is referred to as cascade additive noise models (CANMs)~\cite{ijcai2019p223}. Therefore, in the case of non-linear causal relationships, compared to linear ones, there are more instances where causal relationships cannot be identified using only regression and independence tests. Before discussing the cases where causal relationships cannot be identified in CAM-UV, we define unobserved causal paths (UCPs) and unobserved backdoor paths (UBPs) which are illustrated in Fig.~\ref{figure:path} and used in the lemmas in this section.

\begin{defn}\label{defn:def1}
A directed path from an observed variable to another is called a {\it causal path} (CP). A CP from $X_j$ to $X_i$ is called an {\it unobserved causal path} (UCP) if it ends with the directed edge connecting $X_i$ and its unobserved direct cause (i.e., $X_j\rightarrow \cdots \rightarrow U_m\rightarrow X_i$ where $U_m$ is an unobserved direct cause of $X_i$).
\end{defn}

\begin{defn}\label{defn:def2}
An undirected path between $X_i$ and $X_j$ is called a {\it backdoor path} (BP) if it consists of the two directed paths from a common ancestor of $X_i$ and $X_j$ to $X_i$ and $X_j$ (i.e., $X_i\leftarrow \cdots \leftarrow V_k \rightarrow \cdots \rightarrow X_j$, where $V_k$ is the common ancestor). A BP between $X_i$ and $X_j$ is called an {\it unobserved backdoor path} (UBP) if it starts with the edge connecting $X_i$ and its unobserved direct cause, and ends with the edge connecting $X_j$ and its unobserved direct cause (i.e., $X_i\leftarrow U_m \leftarrow \cdots \leftarrow V_k \rightarrow \cdots \rightarrow U_n \rightarrow X_j$, where $V_k$ is the common ancestor and $U_m$ and $U_n$ are the unobserved direct causes of $X_i$ and $X_j$, respectively). The undirected path $X_i\leftarrow U_k \rightarrow X_j$ is also a UBP, as $V_k$, $U_m$, and $U_n$ can be the same variable.
\end{defn}

The identifiability of CAM-UV is based on Lemmas~1--3 shown below. They show that it is possible to identify the direct causal relationship between two variables if they do not have a UCP or a UBP, otherwise it is impossible to identify the direct direct causal relationship but possible to identify the presence of a UCP or a UBP. This is due to the fact that when the causal relationship is non-linear, if the parent of an observed variable $X_i$ is an unobserved variable $U_j$, the ancestral variables of $U_j$ cannot be removed from $X_i$ by regression. Lemma~1 is about the condition of variable pair $(X_i, X_j)$ having a UCP or a UBP. Lemma~2 is about the condition of variable pair $(X_i, X_j)$ not having a UBP, a UCP, or a direct causal relationship. Lemma~3 is about the condition that $X_j$ is a direct cause of $X_i$, and they do not have a UCP or a UBP. Please refer to \cite{maeda2021discovery} for the proofs of the lemmas.

\begin{lem}
\label{lem:lem1}
Assume the data generation process of the variables is CAM-UV as defined in Section~\ref{section:modelCAM-UV}. If and only if Equation~\ref{eq:lem1} is satisfied, there is a UCP or UBP between $X_i$ and $X_j$ where $G_1$ and $G_2$ denote regression functions satisfying Assumption~\ref{assumption:ass2}.
\begin{align}
\begin{aligned}
\label{eq:lem1}
	&\forall G_1, G_2, M_1 \subseteq (X \setminus \{X_i\}), M_2 \subseteq (X \setminus \{X_j\}),\\
	& \left[\left(X_i - G_1(M_1)\right) \notindepe \left(X_j-G_2(M_2)\right) \right] 
\end{aligned}
\end{align}
Equation~\ref{eq:lem1} indicates that the residual of $X_i$ regressed on any subset of $X\setminus\{X_i\}$ and the residual of $X_j$ regressed on any subset of $X\setminus\{X_j\}$ cannot be mutually independent.
\end{lem}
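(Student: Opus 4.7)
The plan is to prove the two directions of the biconditional separately, using the additive structure of CAM-UV together with the cascade additive noise model (CANM) result from \cite{ijcai2019p223} and Assumption~\ref{assumption:as1}.

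\textbf{Necessity direction} (presence of a UCP or UBP $\Rightarrow$ no choice of $G_1,G_2,M_1,M_2$ makes the residuals independent). Suppose first a UCP $X_j \rightarrow \cdots \rightarrow U_m \rightarrow X_i$ exists. The final edge feeds $X_i$ the term $f_{i,m}(U_m)$, where $U_m$ itself depends nonlinearly on $X_j$ through a chain of unobserved intermediates. By the CANM result, such a composition cannot be rewritten as any additive sum $\sum_{k}g_k(X_k)$ over observed variables; hence, for every $G_1$ and $M_1\subseteq X\setminus\{X_i\}$, the residual $X_i - G_1(M_1)$ retains a nonlinear functional of the noise $N_j$ originally injected at $X_j$. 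Meanwhile, for every $G_2$ and $M_2\subseteq X\setminus\{X_j\}$, the residual $X_j - G_2(M_2)$ still contains $N_j$ itself (since $N_j$ only appears in $X_j$). The two residuals therefore share a term generated by the same external effect $N_j$, and Assumption~\ref{assumption:as1} forces them to be dependent. The UBP case $X_i \leftarrow U_m \leftarrow \cdots \leftarrow V_k \rightarrow \cdots \rightarrow U_n \rightarrow X_j$ is analogous: the noise originating at the common ancestor $V_k$ enters both variables through nonlinear compositions ending in an unobserved parent, so, by CANM, that noise cannot be removed by any additive regression on observed variables, and Assumption~\ref{assumption:as1} again yields residual dependence.

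\textbf{Sufficiency direction} (no UCP and no UBP $\Rightarrow$ some $G_1,G_2,M_1,M_2$ make the residuals independent). Take $M_1 = opa(X_i)\setminus\{X_j\}$ with $G_1(M_1) = \sum_{X_k \in M_1} f_{i,k}(X_k)$, and symmetrically for $X_j$. The resulting residual of $X_i$ equals the direct-edge term from $X_j$ (if present) plus $\sum_{U_k \in upa(X_i)} f_{i,k}(U_k) + N_i$, and analogously for $X_j$. Because there is no UCP in either direction, any ancestral contribution of $X_j$ to $X_i$ (and vice versa) is fully captured through observed parents and hence subtracted out; because there is no UBP, the unobserved-parent terms of $X_i$ and those of $X_j$ are driven by disjoint sets of external noises. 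Joint independence of the $N_\cdot$ then yields independence of the residuals.

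\textbf{Main obstacle.} The delicate step is the CANM invocation: one must rule out coincidental algebraic cancellations and show that the composition $f_{i,m}(U_m)$, with $U_m$ itself a nonlinear function of its own ancestors and noises, genuinely cannot be written as any additive sum $\sum_k g_k(X_k)$ in observed variables, even when $M_1$ is taken to include every observed ancestor of $U_m$. This is where the structural nonlinearity baked into Equation~\ref{eq:4} and the faithfulness-style condition of Assumption~\ref{assumption:as1} carry the argument, and where appealing to the existing CANM identifiability theorem of \cite{ijcai2019p223} provides the cleanest route rather than redoing the analysis from scratch.
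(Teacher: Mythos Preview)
The paper does not actually prove Lemma~\ref{lem:lem1} in the text; it defers to \cite{maeda2021discovery}. So the comparison here is to the intended argument rather than to an in-paper proof.

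Your necessity direction is fine in outline: the point is that a UCP (resp.\ UBP) forces both residuals to retain a functional of the same external noise, and Assumption~\ref{assumption:ass2} then gives dependence. The informal appeal to CANM for ``no additive cancellation'' is appropriate and is exactly the kind of step the cited reference formalizes.

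There is, however, a genuine gap in your sufficiency direction. You set $M_1 = opa(X_i)\setminus\{X_j\}$ and then explicitly note that the residual of $X_i$ still contains the direct-edge term $f_{i,j}(X_j)$ when $X_j\in opa(X_i)$. But then that residual depends on $N_j$, while $X_j - G_2(M_2)$ always depends on $N_j$ as well; by Assumption~\ref{assumption:ass2} the two residuals are \emph{dependent}. So your chosen witness fails precisely in the (perfectly legal) case where $X_j$ is a direct observed parent of $X_i$ with no UCP or UBP. Your subsequent sentence ``any ancestral contribution of $X_j$ to $X_i$ \ldots is fully captured through observed parents and hence subtracted out'' does not address this, because you have deliberately excluded $X_j$ from those observed parents.

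The fix is immediate and is the choice the lemma is set up to allow: Equation~\eqref{eq:lem1} quantifies over $M_1\subseteq X\setminus\{X_i\}$, which permits $X_j\in M_1$. Take $M_1 = opa(X_i)$ and $M_2 = opa(X_j)$ in full, with $G_1,G_2$ the true additive parent functions. Then the residuals reduce to $\sum_{U_k\in upa(X_i)} f_{i,k}(U_k)+N_i$ and $\sum_{U_k\in upa(X_j)} f_{j,k}(U_k)+N_j$, and your disjoint-noise argument (no UCP $\Rightarrow$ neither $N_i$ nor $N_j$ reaches the other side through an unobserved parent; no UBP $\Rightarrow$ no common ancestral noise reaches both through unobserved parents) goes through cleanly. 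The exclusion of $X_j$ is needed only in Lemmas~\ref{lem:lem2} and~\ref{lem:lem3}, not here.
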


\begin{assumption}\label{assumption:ass2}
Let $M_1$ and $M_2$ denote sets satisfying $M_1\subseteq X$ and $M_2\subseteq X$ where $X$ is the set of all the observed variables in CAM-UV defined in Section~\ref{assumption:ass2}. We assume that functions $G_i$ take the forms of generalized additive models (GAMs)~\cite{hastie1990generalized} such that $G_i(M_1)=\sum_{X_m\in M_1}g_{i,m}(X_m)$ where each $g_{i,m}(X_m)$ is a nonlinear function of $X_m$. In addition, we assume that functions $G_i$ satisfy the following condition: When both $(X_i-G_i(M_1))$ and $(X_j-G_j(M_2))$ have terms involving functions of the same external effect $N_k$, then $(X_i-G_i(M_1))$ and $ (X_j-G_j(M_2))$ are mutually dependent  (i.e., $(N_k\notindepe X_i-G_i(M_1))\land (N_k\notindepe X_j-G_j(M_2))\Rightarrow ((X_i-G_i(M_1)) \notindepe (X_j-G_j(M_2)) ) $).\end{assumption}

\begin{lem}
\label{lem:lem2}
Assume the data generation process of the variables is CAM-UV as defined in Section~\ref{section:modelCAM-UV}. If and only if Equation~\ref{eq:lem2} is satisfied, there is no direct causal relationship between $X_i$ and $X_j$, and there is no UCP or UBP between $X_i$ and $X_j$ where $G_1$ and $G_2$ denote regression functions satisfying Assumption~\ref{assumption:ass2}.
\begin{align}
\begin{aligned}
\label{eq:lem2}
	&\exists G_1, G_2, M \subseteq (X \setminus \{X_i,X_j\}), N \subseteq (X \setminus \{X_i,X_j\}),\\
	&[(\left(X_i - G_1(M)\right) \indepe \left(X_j-G_2(N)\right))] 
\end{aligned}
\end{align}
Equation~\ref{eq:lem2} indicates that there are regression functions such that the residuals of $X_i$ and $X_j$ regressed on subsets of $X\setminus\{X_i,X_j\}$ are mutually independent.
\end{lem}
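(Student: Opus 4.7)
The plan is to prove both directions of the biconditional separately, leveraging Lemma~\ref{lem:lem1} for the parts concerning UCPs and UBPs, and treating the absence of a direct causal edge as a separate sub-claim.

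For the forward direction, suppose witnesses $G_1, G_2, M, N$ certify Equation~\ref{eq:lem2}. Because $M \subseteq X \setminus \{X_i, X_j\} \subseteq X \setminus \{X_i\}$ and symmetrically for $N$, these same witnesses refute the universally quantified Equation~\ref{eq:lem1}, so Lemma~\ref{lem:lem1} immediately delivers the absence of any UCP or UBP between $X_i$ and $X_j$. To rule out a direct causal edge, I would argue by contradiction: if $X_j \to X_i$ were a direct edge, then by Equation~\ref{eq:4} the expression for $X_i$ contains a term $f_{i,j}(X_j)$ that genuinely depends on $N_j$. Since $X_j \notin M$, no function $G_1(M)$ of the remaining observed variables can cancel this dependence on $N_j$, so $X_i - G_1(M)$ retains a term involving $N_j$; meanwhile $X_j - G_2(N)$ contains $N_j$ itself because $X_j \notin N$. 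Assumption~\ref{assumption:ass2} then forces the two residuals to be mutually dependent, contradicting Equation~\ref{eq:lem2}. The symmetric argument rules out $X_i \to X_j$.

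For the reverse direction, I would construct witnesses explicitly. Take $M = opa(X_i)$, $N = opa(X_j)$, and let $G_1, G_2$ be the true additive structural maps of Equation~\ref{eq:4} restricted to the observed parents. The absence of a direct edge guarantees $X_j \notin opa(X_i)$ and $X_i \notin opa(X_j)$, so indeed $M, N \subseteq X \setminus \{X_i, X_j\}$. Substituting Equation~\ref{eq:4} yields
\begin{equation*}
X_i - G_1(M) = \sum_{U_k \in upa(X_i)} f_{i,k}(U_k) + N_i, \qquad X_j - G_2(N) = \sum_{U_\ell \in upa(X_j)} f_{j,\ell}(U_\ell) + N_j.
\end{equation*}
I would then verify mutual independence of the two residuals. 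The absence of a UCP from either variable to the other ensures that no unobserved parent of $X_i$ has $X_j$ (hence $N_j$) in its ancestry, and vice versa; the absence of a UBP rules out any common ancestor whose randomness propagates to both $upa(X_i)$ and $upa(X_j)$ through unobserved endpoints. Thus the two residuals are functions of disjoint subsets of the jointly independent external effects $\{N_k\}$, and by Assumption~\ref{assumption:ass2} they are mutually independent, establishing Equation~\ref{eq:lem2}.

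The main obstacle will be the careful bookkeeping in the reverse direction: under the three simultaneous absences (direct edge, UCP, UBP), I must verify that every external effect appearing in the expansion of $X_i - G_1(M)$ is disjoint from those in $X_j - G_2(N)$. This requires enumerating the possible paths by which a shared $N_k$ could enter both residuals through unobserved variables, and showing that each such path is blocked by exactly one of the three assumptions. The forward direction is comparatively mechanical once Lemma~\ref{lem:lem1} has been invoked.
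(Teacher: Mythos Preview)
The paper does not actually prove Lemma~\ref{lem:lem2}; it states the result and directs the reader to \cite{maeda2021discovery} for the proofs of Lemmas~\ref{lem:lem1}--\ref{lem:lem3}. So there is no in-paper argument to compare against, and your proposal stands on its own.

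On its merits, your plan is sound and follows the natural decomposition. The forward direction correctly observes that any witnesses for Equation~\ref{eq:lem2} are automatically witnesses refuting Equation~\ref{eq:lem1} (by the subset inclusions $X\setminus\{X_i,X_j\}\subseteq X\setminus\{X_i\}$ and $X\setminus\{X_i,X_j\}\subseteq X\setminus\{X_j\}$), so Lemma~\ref{lem:lem1} immediately disposes of UCPs and UBPs; the direct-edge case is then a short contradiction via Assumption~\ref{assumption:ass2}. The reverse direction uses exactly the canonical witnesses one would expect, namely $M=opa(X_i)$, $N=opa(X_j)$, and the true additive structural maps.

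Two small points to tighten. First, in the forward direction, the sentence ``no function $G_1(M)$ of the remaining observed variables can cancel this dependence on $N_j$'' needs a word about why observed descendants of $X_j$ that may lie in $M$ cannot absorb the term $f_{i,j}(X_j)$; this is precisely where the faithfulness-type phrasing of Assumption~\ref{assumption:ass2} (residuals that retain a term in $N_k$ remain dependent) and the nonlinearity of CAM-UV are invoked, so make that explicit rather than leaving it implicit. Second, in the reverse direction, your case enumeration should also cover the situation where a shared external effect $N_k$ reaches $X_i$ through an unobserved parent but reaches $X_j$ only through an \emph{observed} parent (or vice versa); this case is harmless because the observed-parent contribution is subtracted by $G_2(N)$, but stating it closes the bookkeeping you correctly flag as the main obstacle.
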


\begin{lem}
\label{lem:lem3}
Assume the data generation process of the variables is CAM-UV as defined in Section~\ref{section:modelCAM-UV}. If and only if Equations~\ref{eq:lem3-1} and \ref{eq:lem3-2} are satisfied, $X_j$ is a direct cause of $X_i$, and there is no UCP or UBP between $X_i$ and $X_j$ where $G_1$ and $G_2$ denote regression functions satisfying Assumption~\ref{assumption:ass2}.
\begin{align}
\begin{aligned}
\label{eq:lem3-1}
	&\forall G_1, G_2, M \subseteq (X \setminus \{X_i,X_j\}), N \subseteq (X \setminus \{X_j\}),\\
	& \left[\left(X_i - G_1(M)\right) \notindepe \left(X_j-G_2(N)\right) \right] 
\end{aligned}
\end{align}
\begin{align}
\begin{aligned}
\label{eq:lem3-2}
	&\exists G_1, G_2, M \subseteq (X \setminus \{X_i\}), N \subseteq (X \setminus \{X_i,X_j\}),\\
	& \left[\left(X_i - G_1(M)\right) \indepe \left(X_j-G_2(N)\right) \right] 
\end{aligned}
\end{align}
Equation~\ref{eq:lem3-1} indicates that the residual of $X_i$ regressed on any subset of $X\setminus\{X_i,X_j\}$ and the residual of $X_j$ regressed on any subset of $X\setminus\{X_j\}$ cannot be mutually independent. Equation~\ref{eq:lem3-2} indicates that there are regression functions such that the residual of $X_i$ regressed on a subset of $X\setminus\{X_j\}$ and the residual of $X_j$ regressed on a subset of $X\setminus\{X_i,X_j\}$ are mutually independent.

\end{lem}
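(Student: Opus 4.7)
The plan is to prove Lemma~\ref{lem:lem3} by establishing the two directions of the equivalence separately, reusing Lemmas~\ref{lem:lem1} and~\ref{lem:lem2} as the principal machinery together with the CAM-UV structural equation and Assumption~\ref{assumption:ass2}.

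For the necessity direction, I would suppose $X_j$ is a direct cause of $X_i$ with no UCP or UBP between them. To establish Equation~\ref{eq:lem3-1}, note that since $X_j\in opa(X_i)$, the structural equation for $X_i$ contains the additive term $f_{i,j}(X_j)$; because any admissible set $M\subseteq X\setminus\{X_i,X_j\}$ excludes $X_j$, no GAM regression $G_1(M)$ can cancel this term, so $X_i-G_1(M)$ retains a nontrivial dependence on $N_j$. Similarly, since $N\subseteq X\setminus\{X_j\}$, the residual $X_j-G_2(N)$ still carries its own external effect $N_j$. Both residuals therefore involve functions of $N_j$, so Assumption~\ref{assumption:ass2} forces them to be dependent. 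To establish Equation~\ref{eq:lem3-2}, I would choose $M=opa(X_i)\cap X$ (permitted since $X_i\notin M$ and $X_j\in M$) and $N=opa(X_j)\cap X$ (permitted since acyclicity together with $X_j\to X_i$ gives $X_i\notin opa(X_j)$), and then set the component functions of $G_1$ and $G_2$ to match the corresponding $f_{i,k}$ and $f_{j,k}$. This reduces the two residuals to $\sum_{U_k\in upa(X_i)}f_{i,k}(U_k)+N_i$ and $\sum_{U_k\in upa(X_j)}f_{j,k}(U_k)+N_j$, and the absence of a UCP or UBP guarantees that these unobserved summands share no common external effect, so the residuals are independent.

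For the sufficiency direction, I would assume Equations~\ref{eq:lem3-1} and~\ref{eq:lem3-2} hold. Because the ranges $M\subseteq X\setminus\{X_i\}$ and $N\subseteq X\setminus\{X_i,X_j\}\subseteq X\setminus\{X_j\}$ in Equation~\ref{eq:lem3-2} lie within the quantifier ranges of Equation~\ref{eq:lem1}, the existence of an independent pair contradicts Equation~\ref{eq:lem1}, so by the contrapositive of Lemma~\ref{lem:lem1} there is no UCP or UBP between $X_i$ and $X_j$. It remains to show $X_j$ is a direct cause of $X_i$. Suppose not; two subcases arise. If there is no direct edge between $X_i$ and $X_j$, then together with the absence of a UCP or UBP, Lemma~\ref{lem:lem2} yields $M,N\subseteq X\setminus\{X_i,X_j\}$ with independent residuals, and since such $N$ lies inside the range in Equation~\ref{eq:lem3-1}, this contradicts Equation~\ref{eq:lem3-1}. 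In the remaining subcase $X_i\to X_j$: here $N_i$ enters $X_i$ additively and enters $X_j$ only through the nonlinear $f_{j,i}(X_i)$; because $N$ excludes $X_i$, no GAM regression on $N$ can cancel the $f_{j,i}(X_i)$ term, so $X_j-G_2(N)$ depends on $N_i$, while $X_i-G_1(M)$ always depends on $N_i$ (regressing on descendants in $M$ cannot subtract the raw $N_i$ out of $X_i$, by the CANM phenomenon discussed in the identifiability section). Assumption~\ref{assumption:ass2} then forces dependence of the two residuals, contradicting Equation~\ref{eq:lem3-2}.

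I expect the main obstacle to be the CANM-style step used to rule out $X_i\to X_j$ in the sufficiency direction: formally justifying that no additive composition of nonlinear functions of descendants of $X_i$ can reproduce $N_i$ is precisely the nonlinear identifiability issue that CAM-UV and Assumption~\ref{assumption:ass2} are designed to address, and invoking them cleanly requires some care in tracking which external effects survive each regression. Once that step is secured, the remainder of the argument follows structurally from Lemmas~\ref{lem:lem1} and~\ref{lem:lem2} and the asymmetric quantifier ranges built into Equations~\ref{eq:lem3-1} and~\ref{eq:lem3-2}.
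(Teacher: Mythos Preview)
The paper does not actually supply a proof of Lemma~\ref{lem:lem3}: immediately before stating Lemmas~\ref{lem:lem1}--\ref{lem:lem3} it says ``Please refer to \cite{maeda2021discovery} for the proofs of the lemmas.'' So there is nothing in this paper to compare your argument against line by line.

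Judged on its own, your plan is the natural one and is structurally sound. The necessity direction is clean: once $X_j\in opa(X_i)$, the term $f_{i,j}(X_j)$ cannot be cancelled by any GAM over $M\subseteq X\setminus\{X_i,X_j\}$, and Assumption~\ref{assumption:ass2} then forces dependence for Equation~\ref{eq:lem3-1}; the choice $M=opa(X_i)$, $N=opa(X_j)$ with matching component functions gives Equation~\ref{eq:lem3-2}. In the sufficiency direction, deducing ``no UCP/UBP'' from Equation~\ref{eq:lem3-2} via Lemma~\ref{lem:lem1}, and ruling out ``no edge'' via Lemma~\ref{lem:lem2}, are exactly the right reductions.

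The one place to tighten is the subcase $X_i\to X_j$. Rather than appealing informally to the CANM phenomenon to argue that regressing $X_i$ on descendants in $M$ cannot remove $N_i$, you can avoid that issue entirely by symmetry: apply the already–proved necessity direction with the roles of $i$ and $j$ swapped. That yields, for all $M'\subseteq X\setminus\{X_i,X_j\}$ and $N'\subseteq X\setminus\{X_i\}$, dependence of $(X_j-G_1(M'))$ and $(X_i-G_2(N'))$; taking $M'=N$ and $N'=M$ contradicts Equation~\ref{eq:lem3-2} directly. This sidesteps the delicate ``residual still carries $N_i$ after regressing on descendants'' step and makes the argument fully rigorous using only Lemmas~\ref{lem:lem1}--\ref{lem:lem2} and the part of Lemma~\ref{lem:lem3} you have already established.
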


\subsection{TS-CAM-UV}

The identifiability of causality in TS-CAM-UV is the same as in CAM-UV. Lemmas 4--6 on identifiability in TS-CAM-UV correspond to Lemmas 1--3 on identifiability in CAM-UV.

\begin{lem}
\label{lem:lemts1}
Assume the data generation process of the variables is TS-CAM-UV as defined in Section~\ref{section:tsmodel}. If and only if Equation~\ref{eq:lem-ts1} is satisfied, there is a UCP or UBP between $X_i^t$ and $X_j^s$ where $G_1$ and $G_2$ denote regression functions satisfying Assumption~\ref{assumption:ass2}.
\begin{align}
\begin{aligned}
\label{eq:lem-ts1}
	&\forall G_1, G_2, M \subseteq (X \setminus \{X_i^t\}), N \subseteq (X \setminus \{X_j^s\}),\\
	& \left[\left(X_i^t - G_1(M)\right) \notindepe \left(X_j^s-G_2(N)\right) \right] 
\end{aligned}
\end{align}
\end{lem}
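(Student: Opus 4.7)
The plan is to reduce Lemma~\ref{lem:lemts1} to Lemma~\ref{lem:lem1} by treating the time-indexed model as an unrolled instance of CAM-UV. First, I would observe that if we regard each time-indexed variable $V_i^t$ as a distinct node in a standard (non time-indexed) DAG, then Equation~\ref{eq:3} has exactly the form of Equation~\ref{eq:4}: the right-hand side is a sum of nonlinear functions of observed parents $opa(V_i^t)$, a sum of nonlinear functions of unobserved parents $upa(V_i^t)$, and an independent external effect $N_i^t$. Stationarity and the maximum-lag bound $r$ merely constrain which time-indexed variables may appear in $opa$ and $upa$, without altering the structural form of the equations. Since the noise variables are jointly independent and the functions $f_{i,t,j,s}$ are nonlinear, the unrolled SCM is a CAM-UV over the variable set $\{V_i^t\}$ with observed subset $\{X_i^t\}$.

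Next, I would verify that Assumption~\ref{assumption:as1} carries over: whenever two time-indexed variables $V_i^t$ and $V_j^s$ both contain terms depending on the same external effect $N_k^\tau$, the dependence condition is inherited from the stationary nonlinear data-generating mechanism, because the same structural equations and noise distributions used to formulate CAM-UV generate each $V_i^t$. The graphical notions of UCP and UBP from Definitions~\ref{defn:def1}--\ref{defn:def2} apply verbatim once the time-indexed nodes are treated as the vertices of the causal graph, since these definitions depend only on the edge structure and the observed/unobserved labeling, not on any time ordering.

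With the reduction in place, Lemma~\ref{lem:lem1} applied to the pair $(X_i^t, X_j^s)$ in the observed set $X = \{X_i^t\}$ yields Equation~\ref{eq:lem-ts1} directly: a UCP or UBP between $X_i^t$ and $X_j^s$ exists if and only if, for every pair of GAM regression functions $G_1, G_2$ satisfying Assumption~\ref{assumption:ass2} and for all $M \subseteq X \setminus \{X_i^t\}$ and $N \subseteq X \setminus \{X_j^s\}$, the residuals $X_i^t - G_1(M)$ and $X_j^s - G_2(N)$ are mutually dependent. No new probabilistic content is needed beyond what Lemma~\ref{lem:lem1} already provides.

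The main obstacle is checking that the reduction is clean when the observation window is large or unbounded in time: the subset families $M, N$ over which the quantifiers range must remain well-defined, and Assumption~\ref{assumption:ass2} must apply to GAM regressions whose arguments may include many lagged copies of the same base variable. For a fixed finite window this is immediate; in general the argument requires noting that adding more candidate regressors cannot create spurious independence, so restricting attention to a sufficiently large finite window preserves the biconditional. Once this bookkeeping is settled, Lemma~\ref{lem:lemts1} follows as a direct corollary of Lemma~\ref{lem:lem1}.
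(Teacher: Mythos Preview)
Your proposal is correct and follows essentially the same approach as the paper: the paper's proof is a two-sentence reduction stating that the relationships between $X_i^t$ and $X_j^s$ in TS-CAM-UV are the same as those of $X_i$ and $X_j$ in CAM-UV, so Lemma~\ref{lem:lemts1} holds because of Lemma~\ref{lem:lem1}. You have simply spelled out more carefully why the unrolled time-indexed SCM is an instance of CAM-UV and why the assumptions and graphical definitions transfer, which is a faithful elaboration of the paper's terse argument.
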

\begin{proof}
	The relationships between $X_i^t$ and $X_j^s$ in TS-CAM-UV are the same as those of $X_i$ and $X_j$ in CAM-UV defined in Section~\ref{section:modelCAM-UV}. Therefore, Lemma~\ref{lem:lemts1} holds because of Lemma~\ref{lem:lem1}.
\end{proof}

\begin{lem}
\label{lem:lem-ts2}
Assume the data generation process of the variables is TS-CAM-UV as defined in Section~\ref{section:tsmodel}. If and only if Equation~\ref{eq:lem-ts2} is satisfied, there is no direct causal relationship between $X_i^t$ and $X_j^s$, and there is no UCP or UBP between $X_i^t$ and $X_j^s$ where $G_1$ and $G_2$ denote regression functions satisfying Assumption~\ref{assumption:ass2}.
\begin{align}
\begin{aligned}
\label{eq:lem-ts2}
	&\exists G_1, G_2, M \subseteq (X \setminus \{X_i^t,X_j^s\}), N \subseteq (X \setminus \{X_i^t,X_j^s\}),\\
	&[(\left(X_i^t - G_1(M)\right) \indepe \left(X_j^s-G_2(N)\right))] 
\end{aligned}
\end{align}
\end{lem}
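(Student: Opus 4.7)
The plan is to reduce Lemma~\ref{lem:lem-ts2} to Lemma~\ref{lem:lem2} by exhibiting a structure-preserving identification of the time-indexed TS-CAM-UV variables with the observed variables of a CAM-UV instance. This is precisely the device already used in the proof of Lemma~\ref{lem:lemts1}, so the same strategy should apply here verbatim.

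Concretely, I would first observe that the defining equation (\ref{eq:3}) of TS-CAM-UV is of exactly the form (\ref{eq:4}) once we relabel each $V_i^t$ as a single variable in a new index set and each $f_{i,t,j,s}$ as a nonlinear function attached to the corresponding edge. Under this relabeling, the set $X=\{X_i^t\}$ becomes the set of observed variables in the induced CAM-UV, $U=\{U_i^t\}$ becomes the unobserved set, and the direct-cause relations $opa(\cdot)$ and $upa(\cdot)$ are preserved by construction. Moreover, Assumption~\ref{assumption:as1} transfers to the induced model because the noise variables $N_i^t$ are jointly independent, so any two TS-CAM-UV variables sharing a term in some $N_k^r$ are mutually dependent.

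Next I would verify that the graph-theoretic notions used in Lemma~\ref{lem:lem2} are stable under this identification. A direct causal relationship between $X_i^t$ and $X_j^s$ in TS-CAM-UV is, by definition, a direct edge in the induced CAM-UV graph; and a UCP or UBP between $X_i^t$ and $X_j^s$ is a UCP or UBP in the induced graph, since the definitions in Definitions~\ref{defn:def1} and \ref{defn:def2} depend only on the directed-edge structure and the observed/unobserved labeling, both of which are preserved. Likewise, the subsets $M, N \subseteq (X \setminus \{X_i^t, X_j^s\})$ and the regression functions $G_1, G_2$ of Assumption~\ref{assumption:ass2} correspond exactly to the subsets and regression functions used in Lemma~\ref{lem:lem2}.

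Given these correspondences, the biconditional in Equation~(\ref{eq:lem-ts2}) translates term by term into Equation~(\ref{eq:lem2}) for the induced CAM-UV, so invoking Lemma~\ref{lem:lem2} yields both directions of Lemma~\ref{lem:lem-ts2}. The only subtlety worth addressing explicitly is that the ``stationarity'' and ``maximum lag $r$'' conditions of TS-CAM-UV do not affect the reduction: they constrain which functions $f_{i,t,j,s}$ may be nonzero but do not alter the CAM-UV form of the equations, and Lemma~\ref{lem:lem2} holds for any CAM-UV regardless of such additional restrictions. Thus the proof should be a one-line appeal to Lemma~\ref{lem:lem2}, mirroring the proof of Lemma~\ref{lem:lemts1}, with the above verifications done implicitly.
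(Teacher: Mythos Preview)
Your proposal is correct and takes essentially the same approach as the paper: the paper's own proof is a one-line reduction, noting that the relationships between $X_i^t$ and $X_j^s$ in TS-CAM-UV are the same as those of $X_i$ and $X_j$ in CAM-UV, and then invoking Lemma~\ref{lem:lem2}. Your more detailed verification of why the identification preserves the relevant structure is a welcome elaboration but does not depart from the paper's strategy.
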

\begin{proof}
	The relationships between $X_i^t$ and $X_j^s$ in TS-CAM-UV are the same as those of $X_i$ and $X_j$ in CAM-UV defined in Section~\ref{section:modelCAM-UV}. Therefore, Lemma~\ref{lem:lem-ts2} holds because of Lemma~\ref{lem:lem2}.
\end{proof}

\begin{lem}
\label{lem:lem-ts3}
Assume the data generation process of the variables is TS-CAM-UV as defined in Section~\ref{section:tsmodel}. If and only if Equations~\ref{eq:lem-ts3-1} and \ref{eq:lem-ts3-2} are satisfied, $X_j^s$ is a direct cause of $X_i^t$, and there is no UCP or UBP between $X_i^t$ and $X_j^s$ where $G_1$ and $G_2$ denote regression functions satisfying Assumption~\ref{assumption:ass2}.
\begin{align}
\begin{aligned}
\label{eq:lem-ts3-1}
	&\forall G_1, G_2, M \subseteq (X \setminus \{X_i^t,X_j^s\}), N \subseteq (X \setminus \{X_j^t\}),\\
	& \left[\left(X_i^t - G_1(M)\right) \notindepe \left(X_j^s-G_2(N)\right) \right] 
\end{aligned}
\end{align}
\begin{align}
\begin{aligned}
\label{eq:lem-ts3-2}
	&\exists G_1, G_2, M \subseteq (X \setminus \{X_i^t\}), N \subseteq (X \setminus \{X_i^t,X_j^s\}),\\
	& \left[\left(X_i^t - G_1(M)\right) \indepe \left(X_j^s-G_2(N)\right) \right] 
\end{aligned}
\end{align}
\end{lem}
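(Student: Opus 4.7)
The plan is to reduce Lemma~\ref{lem:lem-ts3} to Lemma~\ref{lem:lem3} directly, mirroring the strategy already used in the proofs of Lemmas~\ref{lem:lemts1} and \ref{lem:lem-ts2}. The key observation is that the TS-CAM-UV structural equation in Equation~\ref{eq:3} has exactly the same algebraic form as the CAM-UV equation in Equation~\ref{eq:4}, once we relabel each time-indexed variable $V_i^t$ as a single variable in an enlarged index set. The maximum time lag $r$ and the stationarity assumption only constrain the functional forms and the direction of admissible edges; they do not change the dependency structure between any fixed pair $(X_i^t, X_j^s)$ once the data-generating process is written out.

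First, I would make the reduction precise: define a bijection between the TS-CAM-UV variable set $\{V_i^t\}$ and an abstract CAM-UV variable set $\{\tilde V_k\}$, and check that (i) the noise variables $N_i^t$ are jointly independent and play the role of the CAM-UV external effects, (ii) the causal functions $f_{i,t,j,s}$ are nonlinear and appear additively, so Equation~\ref{eq:3} is a CAM-UV equation in the relabeled variables, and (iii) Assumption~\ref{assumption:as1} (the dependence condition on variables sharing an external effect) transfers verbatim. I would also verify that the notions of UCP and UBP, as given in Definitions~\ref{defn:def1}--\ref{defn:def2}, depend only on the graph structure on observed and unobserved nodes and not on any time labeling, so they coincide under the relabeling.

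Next, I would apply Lemma~\ref{lem:lem3} to the pair $(\tilde X_i, \tilde X_j)$ corresponding to $(X_i^t, X_j^s)$. Lemma~\ref{lem:lem3} then yields that $\tilde X_j$ is a direct cause of $\tilde X_i$ with no UCP or UBP between them, iff Equations~\ref{eq:lem3-1} and \ref{eq:lem3-2} hold with $M, N$ drawn from the relabeled observed variables. Translating back via the bijection, the subsets $M \subseteq X \setminus \{X_i^t, X_j^s\}$ and $N \subseteq X \setminus \{X_j^s\}$ (and symmetrically for the second equation) correspond exactly to the subsets appearing in Equations~\ref{eq:lem-ts3-1}--\ref{eq:lem-ts3-2}, and Assumption~\ref{assumption:ass2} on $G_1, G_2$ is preserved because it is a purely functional condition on regressors, insensitive to how the observed variables are indexed.

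The only subtle step is confirming that the relabeling is lossless, that is, that no additional identifiability constraint is introduced or removed by the time structure. Since TS-CAM-UV imposes extra restrictions (stationarity, bounded lag, and the prior that causes precede effects in time) only \emph{beyond} what CAM-UV does, and these restrictions do not alter the joint distribution's conditional independence structure after the relabeling, the if-and-only-if of Lemma~\ref{lem:lem3} carries over without modification. I expect this bookkeeping about the bijection and the transfer of assumptions to be the main, if minor, technical burden; the substantive mathematical content lies entirely in the already-proved Lemma~\ref{lem:lem3}.
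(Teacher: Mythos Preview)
Your proposal is correct and follows essentially the same approach as the paper: the paper's proof simply observes that the relationships between $X_i^t$ and $X_j^s$ in TS-CAM-UV are the same as those between $X_i$ and $X_j$ in CAM-UV, and invokes Lemma~\ref{lem:lem3} directly. Your version spells out the relabeling bijection and the transfer of Assumptions~\ref{assumption:as1} and~\ref{assumption:ass2} more carefully, but the substance is identical.
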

\begin{proof}
	The relationships between $X_i^t$ and $X_j^s$ in TS-CAM-UV are the same as those of $X_i$ and $X_j$ in CAM-UV defined in Section~\ref{section:modelCAM-UV}. Therefore, Lemma~\ref{lem:lem-ts3} holds because of Lemma~\ref{lem:lem3}.
\end{proof}

\section{Methods}
\subsection{CAM-UV-PK: Causal additive models with unobserved variables using prior knowledge}

\begin{figure}[t]
\centering
\includegraphics[width=10.3cm]{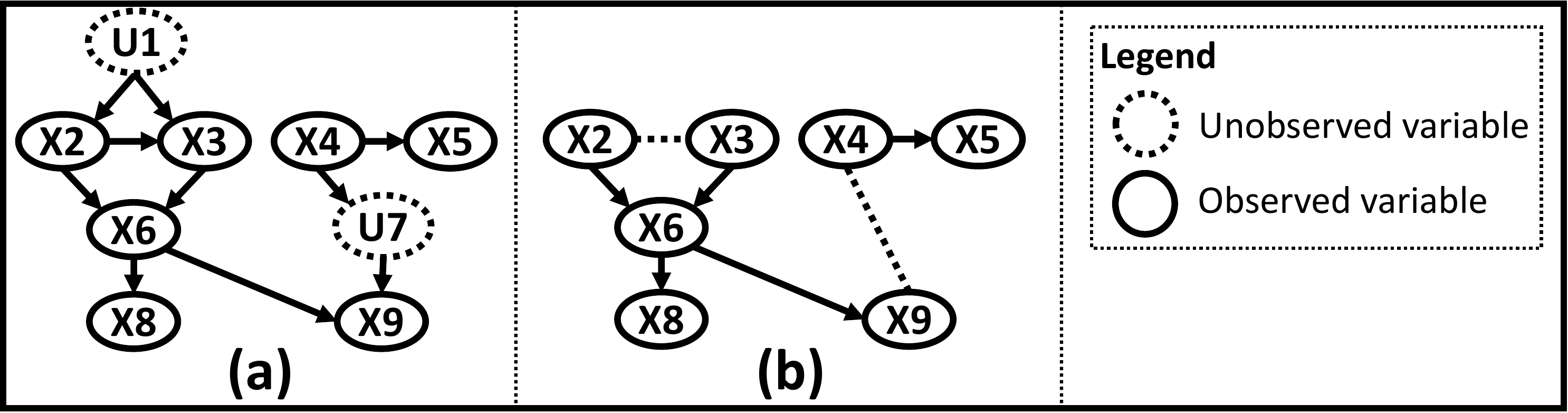}
\caption{(a): True causal graph. (b): Causal graph generated by the CAM-UV algorithm.}
\label{figure:camuvgraph}
\end{figure}

This section proposes a method called CAM-UV using prior knowledge (CAM-UV-PK). This method is for discovering causal additive models with unobserved models defined in Section~\ref{section:modelCAM-UV}. In addition to the arguments of the CAM-UV algorithm, the CAM-UV-PK algorithm requires an argument ${\mathbf T}$ that is a list of ordered variable pairs. If a ordered variable pair $(X_i, X_j)$ is included in ${\mathbf T}$, it means that it is assumed that $X_i$ cannot be a direct or indirect cause of $X_j$. 

The CAM-UV algorithm and CAM-UV-PK algorithm output causal graphs with directed edges and undirected dashed edges. Directed edges indicate variable pairs having direct causal relationships, and undirected dashed edges indicate variable pairs having UCPs or UBPs. For example, Figure~\ref{figure:camuvgraph}-(a) shows a true causal graph, and Figure~\ref{figure:camuvgraph}-(b) shows the causal graph generated by the CAM-UV algorithm. $X_2$ and $X_3$ have a UBP ($X_2\leftarrow U_1 \rightarrow X_3$), so they are connected with an undirected dashed path in Figure~\ref{figure:camuvgraph}-(b). $X_4$ and $X_9$ have a UCP ($X_4\rightarrow U_7 \rightarrow X_9$), so they are also connected with an undirected dashed path in Figure~\ref{figure:camuvgraph}-(b).

\begin{algorithm}[]
\SetKwProg{init}{initialization}{}{}
\DontPrintSemicolon
\KwIn{${\mathbf X}$: Samples of $p$ observed variables $\{X_1,\cdots,X_p\}$, ${\mathbf T}$: A list of ordered pairs of variables where it is assumed that the first variable cannot be the direct or indirect cause of the second variable, $d$: maximal number of variables to examine causality for each step, significance level for independence test $\alpha$.}
\KwOut{the sets of the parents $\{M_1, \cdots, M_p \}$.}
\SetKwBlock{Begin}{function}{end function}
\Begin($\text{getDirectedEdges} {(} X, d, \alpha {)}$)
{
	{\sf\# PHASE 1: Extracting the candidates of the parents of each variable.}\;
	    \For{$i=1$ {\bf to} $p$}{
    	{\bf Initialize} $M_i \leftarrow \emptyset$.\;
    }
    {\bf Initialize} $t \leftarrow 2$.\;
    \While{$t\leq d$}{
    	{\bf Initialize} $noChange \leftarrow {\rm True}$.\;
    	\ForEach{$K\in\{K|K\subseteq X, |K|=t\}$}{
			{\sf\# Finding the most endogenous variable $X_b$ in {\it K}}\;
			\ForEach{$K\in\{K|K\subseteq X, |K|=t\}$}{
				$maxIndependence\leftarrow 0$\;
			$maxIndependenceVariable\leftarrow {\rm NULL}$\;
				\ForEach{$X_i\in K$}{
					\sf\# Checking whether there exists variable $X_j \in K \setminus \{X_i\}$ that cannot be a cause of $X_i$ according to prior knowledge ${\mathbf T}$.\;
					\If{$\exists X_j\in K\setminus \{X_i\},\ \ [(X_j, X_i) \in {\bf T}]$}{
						continue\;
					}
					$indepe\leftarrow \reallywidehat{\text{p-HSIC}}(X_i-G_1(M_i\cup K\setminus\{X_i\}),\{X_j-G_2(M_j)| X_j\in K\setminus\{X_i\}\})$\;
					\If{$maxIndependence<indepe$}{
							$maxIndependence\leftarrow indepe$\;
								$maxIndependenceVariable\leftarrow X_i$\;
					}
				}
				$X_b\leftarrow maxIndependenceVariable$\;
			}
			{\sf\# Computing the independence between the residuals}\;
			$e\leftarrow\reallywidehat{\text{p-HSIC}}(X_b-G_1(M_b\cup K\setminus\{X_b\}),\{X_j-G_2(M_j)| X_j\in K\setminus\{X_b\}\})$\;
			$h\leftarrow\displaystyle\max_{x_j\in K\setminus\{X_b\}}\reallywidehat{\text{p-HSIC}}(X_b-G_1(M_b), X_j-G_2(M_j))$\;
			{\sf\# Checking whether $X_b$ is really a sink of {\it $K$}}\;
			\If{$(\alpha < e)\land(\alpha > h)$}{
				{\sf\# When $X_b$ is a sink of $K$, add each variable in $K\setminus\{X_b\}$ to $M_b$}\;
				$M_b \leftarrow M_b\cup (K\setminus\{X_b\})$\;
				$noChange \leftarrow {\rm False}$\;
			}
    	}
    	{\sf\# If each $M_i$ remains unchanged, increment $t$ by one. If not, substitute $2$ for $t$.}\;
    	\uIf{$noChange={\rm True}$}{
    		$t\leftarrow t+1$\;
    	}\Else{
    		$t\leftarrow 2$\; 
    	}
    }
    {\sf\# PHASE 2: Determining the parents of each variable.}\;
    \For{$i=1$ {\bfseries to} $p$}{
    	\ForEach{$X_j\in M_i$}{
    		{\sf\# Checking whether $X_j$ is parent of $X_i$}\;
    		\If{$\alpha < \reallywidehat{\text{\rm p-HSIC}}(X_i-G_1(M_i\setminus \{X_j\}),X_j-G_2(M_j))$}{
	    		{\sf \# When $X_j$ is not a parent, remove it from $M_i$}\;
    			$M_i\leftarrow M_i\setminus \{X_j\}$\;
    		}
    	}
    }
    \Return{$\{M_1, \cdots, M_p \}$}
}
\caption{Determine the directed edges}
   \label{algo:algorithm}
\end{algorithm}

The CAM-UV-PK algorithm incorporates restriction using prior knowledge ${\mathbf T}$ into the process of causal inference in the CAM-UV algorithm. The CAM-UV algorithm has two-step algorithm~\cite{pmlr-v161-maeda21a, maeda2021discovery}. The first step determines the directed edges, and the second one determines the undirected dashed edges. There is no difference in the second step between the CAM-UV-PK algorithm and the CAM-UV algorithm. The first step of the CAM-UV-PK algorithm is listed in Algorithm~\ref{algo:algorithm}. Lines~14--16 in Algorithm~\ref{algo:algorithm} are added to the CAM-UV algorithm. This part of the algorithm refers to the prior knowledge ${\mathbf T}$ to avoid considering unnecessary causal candidates. The method extracts the candidates of the direct causes (parents) of each variable (lines~2--34) and determines the direct causes of each variables (lines~35--41). The method identifies the most endogenous variable $X_b$ in each $K\in\{K|K\subseteq X, |K|=t\}$. When $X_i=x_b$ is satisfied, $X_i$ maximizes $\reallywidehat{\text{p-HSIC}}(X_i-G_1(M_i\cup K\setminus\{X_i\}),\{X_j-G_2(M_j)| X_j\in K\setminus\{X_i\}\})$. In lines~14--16 which are newly added in CAM-UV-PK, the method checks whether there exists $X_j\in K \setminus\{X_i\}$ that cannot be a direct or indirect cause of $X_i$ according to the prior knowledge ${\mathbf T}$. If $(X_j, X_i)\in {\mathbf T}$ is satisfied, the method stops checking whether $X_i$ is endogenous to $K\setminus\{X_i\}$. Therefore, this check prevents incorrect inference of causal relationships.

\subsection{TS-CAM-UV: Time series causal additive models with unobserved variables}

This section proposes a method called the time-series CAM-UV (TS-CAM-UV) algorithm. The TS-CAM-UV algorithm uses as prior knowledge the assumption, called time priority, that effect does not precede its cause in time. The TS-CAM-UV algorithm uses the CAM-UV-PK algorithm, and the prior knowledge of time priority is used for the argument of the CAM-UV-PK, ${\mathbf T}$.

The TS-CAM-UV algorithm first creates data with $q\times (r+1)$ variables where $q$ is the number of the variables of original data, and $r$ is the maximal considered time lag given as an argument. Let ${\mathbf X_t}=\{X^t_1,\cdots,X^t_q\}$ denote the variables in original data. The TS-CAM-UV algorithm creates data with variables $\mathbf X^{\rm new}=\{X^t_1,\cdots,X^t_q,X^{t-1}_1,\cdots,X^{t-1}_q,\cdots,X^{t-r}_1,\cdots,X^{t-r}_q\}$. Therefore, the number of the new samples decreases to $n-r$ where $n$ denotes the number of original data. Then, the TS-CAM-UV algorithm creates a list of ordered variables $K=\{(X^t_i,X^{t^{\prime}}_j)|t>t^{\prime}, 0\leq i\leq q, 0\leq j\leq q, \}$.\par
The TS-CAM-UV algorithm uses $\mathbf X^{\rm new}$ and $K$ for the arguments of CAM-UV-PK ${\mathbf X}$ and ${\mathbf T}$, respectively. Then, CAM-UV-PK outputs a causal graph of the $q$ variables with $r$ time lag.

\section{Experiments}
We conducted experiments to examine the performance of the CAM-UV-PK algorithm and the TS-CAM-UV algorithm. The CAM-UV-PK algorithm is compared with that of CAM-UV. The TS-CAM-UV algorithm is compared with VarLiNGAM and LPCMCI. Here, we primarily compare the accuracy of directed edges. This is because, in other methods, there are no approaches that consider the effects of unobserved intermediate variables (unobserved variables on the causal paths between observed variables), and also because CAM-UV aims to ensure that the inference of directed edges is not biased due to latent confounders.

\subsection{CAM-UV-PK: Causal additive models with unobserved variables using prior knowledge}

We examined the performance of CAM-UV-PK compared to CAM-UV using simulated data. We compared and evaluated the performance of CAM-UV-PK with prior knowledge ranging from 0 to 4. The CAM-UV algorithm is the same as the CAM-UV-PK algorithm with no input of prior knowledge. We performed 100 experiments using artificial data with each sample size $n\in\{100, 200, \cdots, 900, 1000\}$ to compare our method to existing methods. In each experiment, the samples are created as follows:
\begin{itemize}
	\item The number of observed variables is 10.
	\item The number of the observed variable pairs having unobserved common causes is 4.
	\item The number of observed variable pairs having unobserved causal intermediate variables is 2.
	\item The number of the observed variable pairs having direct causal effects is 10.
	\item Variable pairs having unobserved common causes, unobserved intermediate causal variables, or direct causal relationships were randomly selected under the restriction that the set of variable pairs with unobserved common causes, the set of variable pairs with unobserved intermediate causal variables, and the set of variable pairs with direct causal relationships were mutually disjoint.
	\item The causal effect of $V_{j}$ on $V_{i}$ is determined as follows:
\begin{equation}
\left(\sin\left(a_1 \left( V_j+b_1\right) \right)\right)^3 c_1+\left(\frac{1}{1+\exp(-a_2(V_j+b_2))}-0.5\right)c_2
\end{equation}
where $a_1$, $a_2$, $b_1$, $b_2$, $c_1$, and $c_1$ are constants that take random value for each $(i,j)$. Constants $a_1$ and $a_2$ are taken from $U(9,11)$, $b_1$ and $b_2$ are taken from $U(-0.1,0.1)$, and $c_1$ and $c_2$ are taken from $U(3,5)$. This function is also used in experiments to validate the TS-CAM-UV algorithm in the next section so that causal effects do not converge or diverge over time.
\end{itemize}

The arguments of TS-CAM-UV, $\alpha$ (significance level for independence test) and $d$ (maximal number of variables to examine causality for each step) are set to 0.01 and 2, respectively.

We compared the performance of the identification of direct causal relationships. We used precision, recall, and F-measure as the evaluation measures. True positive (TP) is the number of true directed edges that a method correctly infers in terms of their positions and directions. Precision represents the TP divided by the number of estimations, and recall represents the TP divided by the number of all the true directed edges. Furthermore, F-measure is defined as $\text{F-measure} = 2 \cdot \text{precision} \cdot \text{recall} / (\text{precision} + \text{recall})$. In each experiment, out of the ten variable pairs with direct causal relationships, four were excluded from the evaluation. These four causal relationships were used as prior knowledge in CAM-UV-PK.

\begin{figure}[t]
\centering
\includegraphics[width=12.0cm]{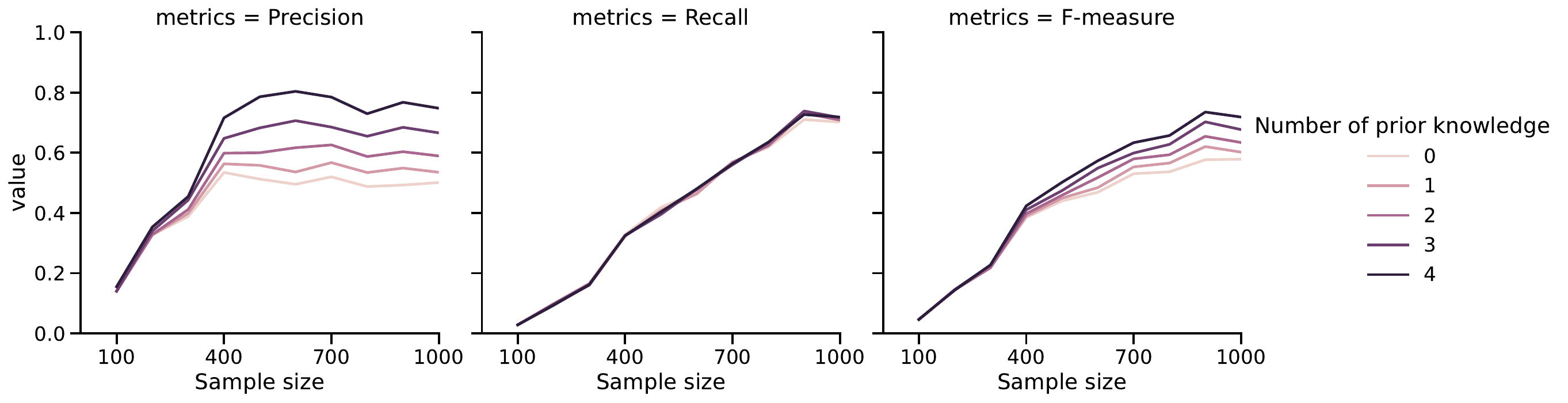}
\caption{The performance of the CAM-UV-PK and CAM-UV algorithms: The CAM-UV algorithm is equivalent to the CAM-UV-PK algorithm with no prior knowledge.}
\label{figure:res_CAM-UVpr}
\end{figure}

Fig.~\ref{figure:res_CAM-UVpr} shows the results of the identification of direct causal relationships. The figure plots the average of precision, recall, and F-measure. It can be seen that precision and F-measure increase with the number of prior knowledge. The CAM-UV algorithm is the CAM-UV-PK algorithm without prior knowledge, and this has the lowest precision and F-measure. The number of prior knowledge does not significantly affect recall.

The above experimental results of the CAM-UV-PK algorithm confirm that the number of prior knowledge improves the precision and F-measure of the identification of direct causal relationships.

\subsection{TS-CAM-UV: Time series causal additive models with unobserved variables}

We examined the performance of TS-CAM-UV compared to LPCMCI and VarLiNGAM using simulated data and real-world data. For LPCMCI, two methods of conditional independence test were used for the comparison: Partial correlation test (ParCorr) and Gaussian process regression and a distance correlation test on the residuals (GPDC). ParCorr assumes linear additive noise models, and GPDC assumes nonlinear additive noise models.

\subsubsection{Simulated data}
\label{sec:simutest}

We performed 100 experiments using artificial data with each sample size $n\in\{100, 200, \cdots, 1900, 2000\}$ to compare our method to existing methods. In each experiment, the samples are created as follows:
\begin{itemize}
	\item The number of observed variables and the maximum time lag are 3 and 2, respectively. Therefore, the number of the variables representing different time lags of all the observed variables is 9 (e.g. $|\{X_i^t\}|=9$).
	\item The number of observed variable pairs having unobserved common causes is 2.
	\item The number of observed variable pairs having unobserved intermediate variables is 2.
	\item The number of observed variable pairs having direct causal relationships is 5.
	\item Variable pairs having unobserved common causes, unobserved intermediate causal variables, or direct causal relationships were randomly selected under the restriction that the set of variable pairs with unobserved common causes, the set of variable pairs with unobserved intermediate causal variables, and the set of variable pairs with direct causal relationships were mutually disjoint.
	\item The causal effect of $V^{s}_j$ on $V^{t}_i$ is determined as below:
\begin{equation}
\left(\sin\left(a_1 \left( V_{s}^j+b_1\right) \right)\right)^3 c_1+\left(\frac{1}{1+\exp(-a_2(V_{s}^j+b_2))}-0.5\right)c_2
\end{equation}
where $a_1$, $a_2$, $b_1$, $b_2$, $c_1$, and $c_1$ are constants that take random value for each $(i,j,t,t^{\prime})$. Constants $a_1$ and $a_2$ are taken from $U(9,11)$, $b_1$ and $b_2$ are taken from $U(-0.1,0.1)$, and $c_1$ and $c_2$ are taken from $U(3,5)$.
\end{itemize}

In this experiment, we compared the performance of the identification of direct causal relationships. That is, the edges with arrows in causal graphs ($\rightarrow$).

The arguments of the TS-CAM-UV algorithm, VarLiNGAM, and LPCMCI were set as follows:
\begin{itemize}
	\item TS-CAM-UV
	\begin{itemize}
		\item[$\circ$] Significance level for independence test: 0.01.
		\item[$\circ$] Maximal number of causal variables to examine causality for each step: 2.
		\item[$\circ$] Maximal number of time lags: 2.
	\end{itemize}
	\item VarLiNGAM
	\begin{itemize}
		\item[$\circ$] Maximal number of time lags: 2.
		\item[$\circ$] Threshold value for the strength of the causal effects (i.e. the absolute values of coefficients): 0.01, 0.05, 0.1, and 0.5.
	\end{itemize}
	\item LPCMCI
	\begin{itemize}
		\item[$\circ$] Significance level for independence test: 0.01.
		\item[$\circ$] Maximal number of time lags: 2.
		\item[$\circ$] Methods of conditional independence test: GPDC and ParCorr.
	\end{itemize}
\end{itemize}

The results are shown in Fig.~\ref{figure:ts_CAM-UV}. The figure plots the average of precision, recall, and F-measure. The values in the brackets for VarLiNGAM indicate threshold values for the strength of causal effects. TS-CAM-UV showed the highest precision for $n\geq 200$, the highest recall for $n\geq 1200$, and the highest F-measure for $n\geq 600$ compared to other methods.

\begin{figure}[t]
\centering
\includegraphics[width=12.0cm]{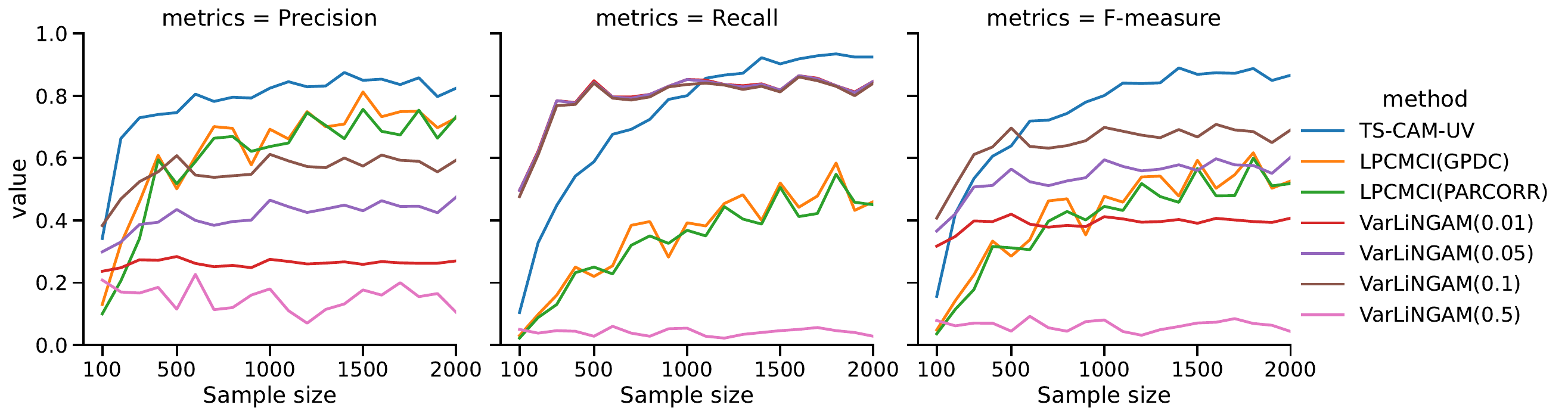}
\caption{The performance of the TS-CAM-UV compared to LPCMCI and VarLiNGAM.}
\label{figure:ts_CAM-UV}
\end{figure}

\subsubsection{Real world data}

We also conducted an experiment using official foreign exchange quotation data for the Japanese yen at Mizuho Bank\footnote{Mizuho Bank: https://www.mizuhobank.co.jp/market/historical.html (in Japanese).}. The data consist of daily quotes for USD, GBP, EUR, CHF, and CAD from the 26th October 2021 to the 8th November 2023. The total sample size is 500.

We set the maximal lag length of every method to 1. The threshold value for causal effects for VarLiNGAM was set to 0.1 which gave the best result in experiments using simulated data shown in Section~\ref{sec:simutest}. All other arguments were kept the same as in Section~\ref{sec:simutest}.

\begin{figure}[h]
  \begin{minipage}[b]{.49\linewidth}
    \centering
    \includegraphics[keepaspectratio, scale=0.23]{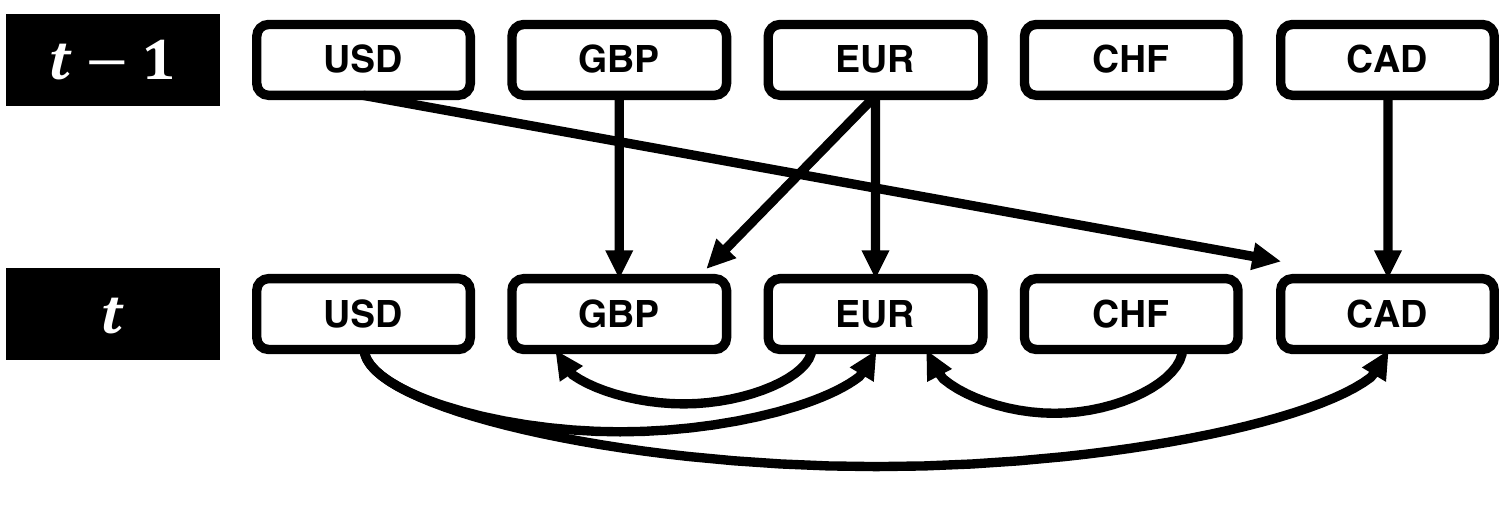}
    \subcaption{The causal graph only with directed edges generated from TS-CAM-UV.\\ }
  \end{minipage}
  \ \ \ \begin{minipage}[b]{.49\linewidth}
    \centering
    \includegraphics[keepaspectratio, scale=0.23]{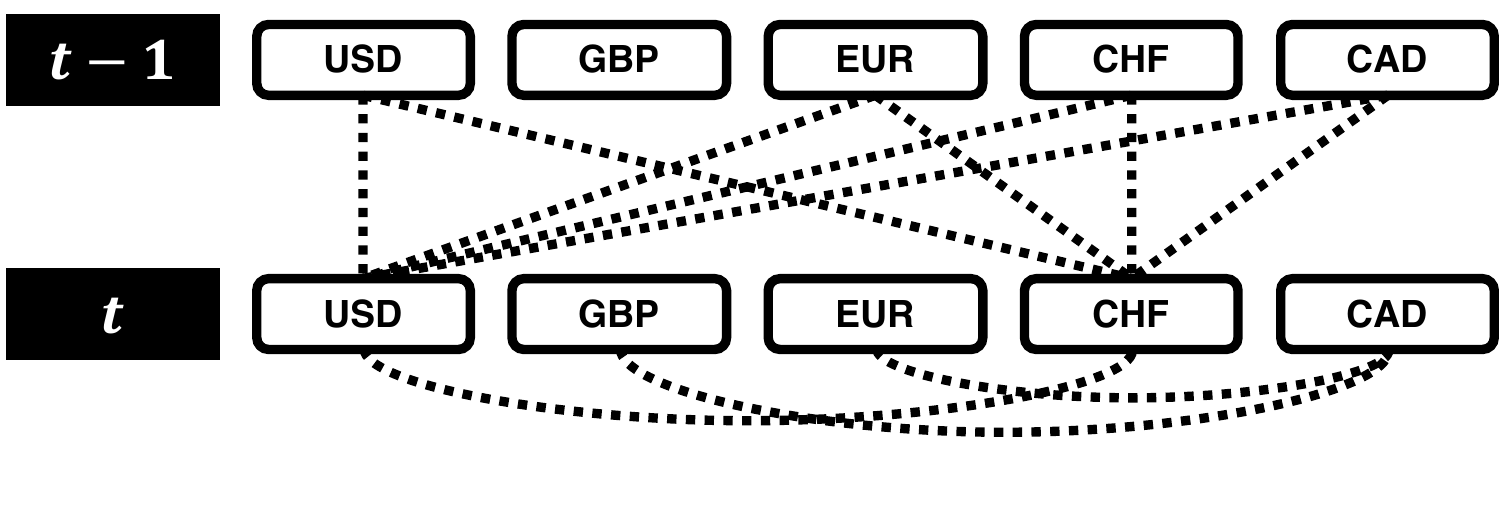}
    \subcaption{The causal graph only with edges other than directed edges generated from TS-CAM-UV.}
  \end{minipage}\\ \\ 
  \begin{minipage}[b]{.49\linewidth}
    \centering
    \includegraphics[keepaspectratio, scale=0.23]{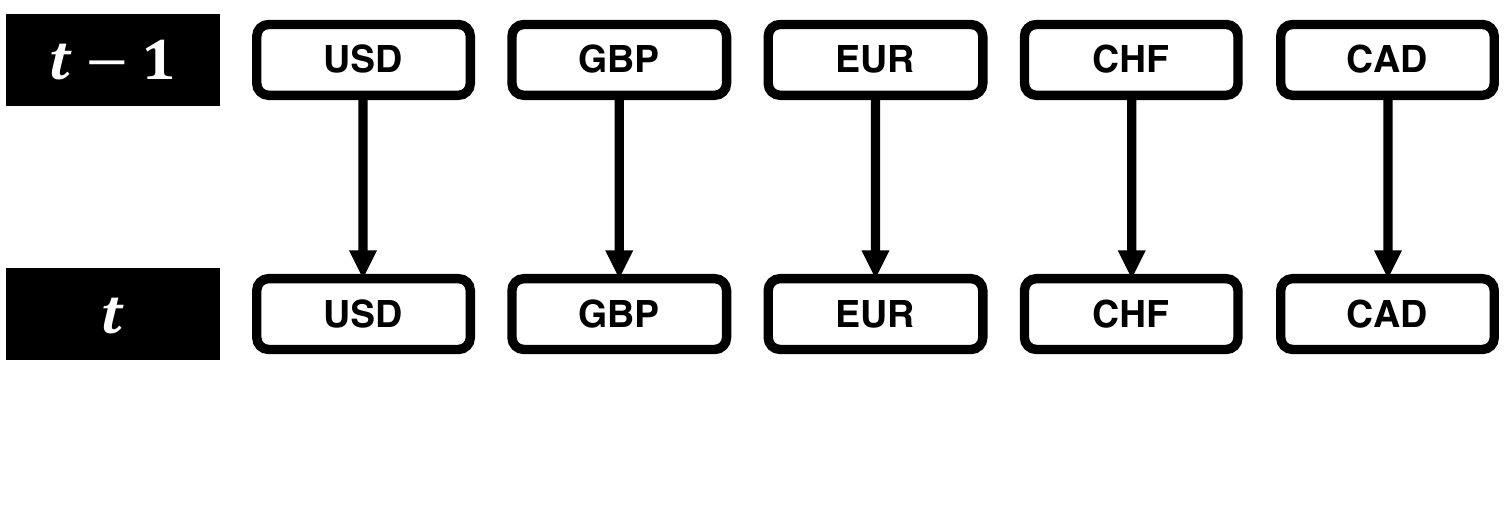}
    \subcaption{The causal graph only with directed edges generated from LPCMCI (ParCorr).\\}
  \end{minipage}
  \ \ \ \begin{minipage}[b]{.49\linewidth}
    \centering
    \includegraphics[keepaspectratio, scale=0.23]{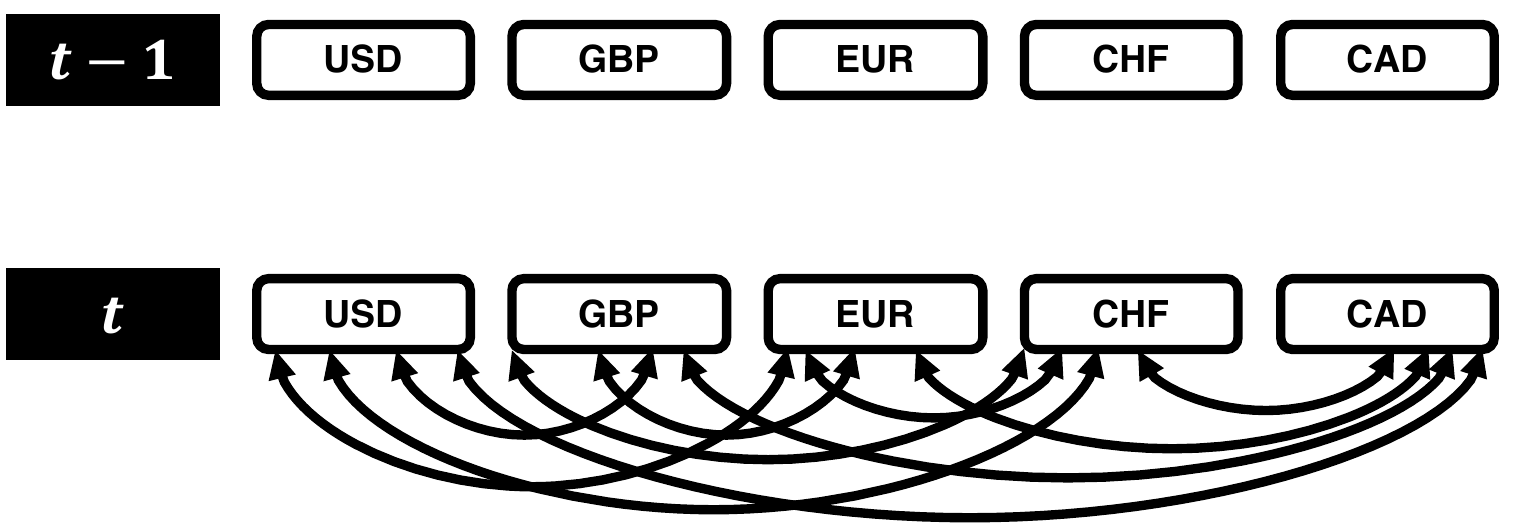}
    \subcaption{The causal graph with edges other than directed edges generated from LPCMCI (ParCorr).}
  \end{minipage}\\ \\ 
  \begin{minipage}[b]{.49\linewidth}
    \centering
    \includegraphics[keepaspectratio, scale=0.23]{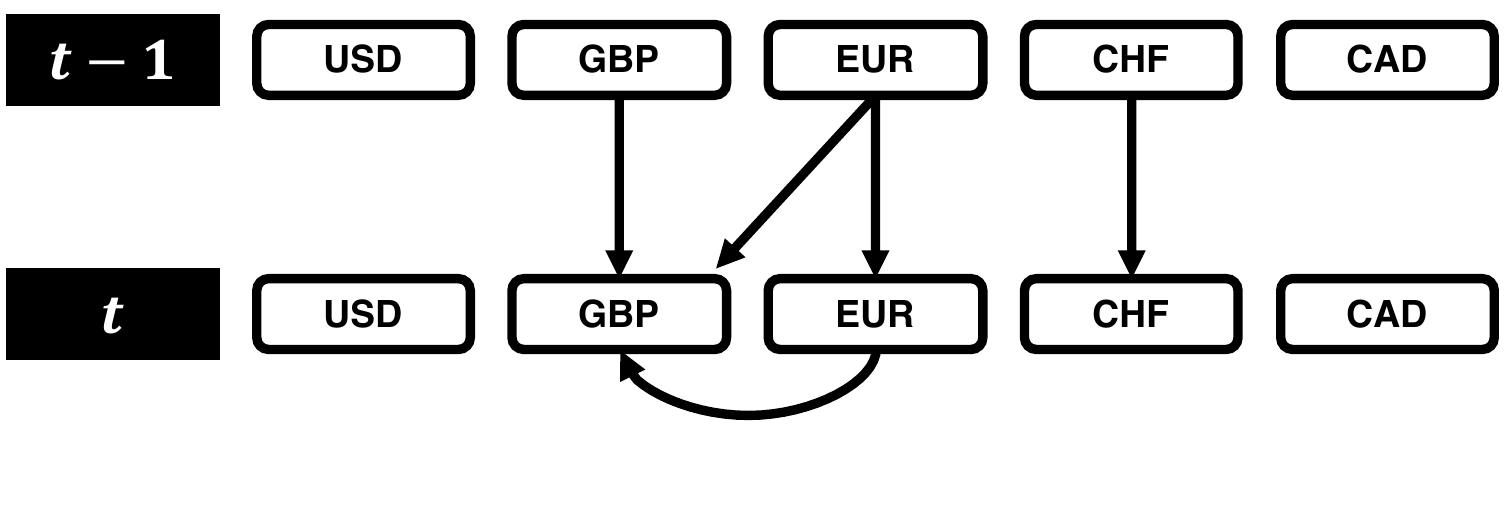}
    \subcaption{The causal graph only with directed edges generated from LPCMCI (GPDC).\\}
  \end{minipage}
  \ \ \ \begin{minipage}[b]{.49\linewidth}
    \centering
    \includegraphics[keepaspectratio, scale=0.23]{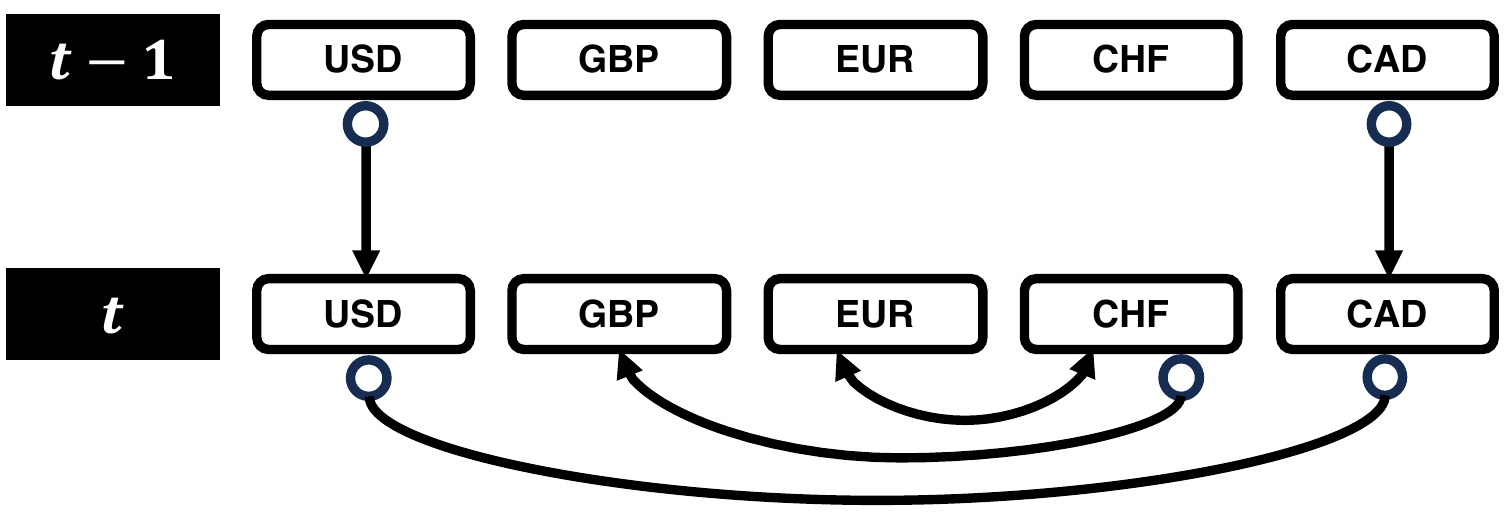}
    \subcaption{The causal graph with edges other than directed edges generated from LPCMCI (GPDC).}
  \end{minipage}\\ \\ 
  \begin{minipage}[b]{.49\linewidth}
    \centering
    \includegraphics[keepaspectratio, scale=0.23]{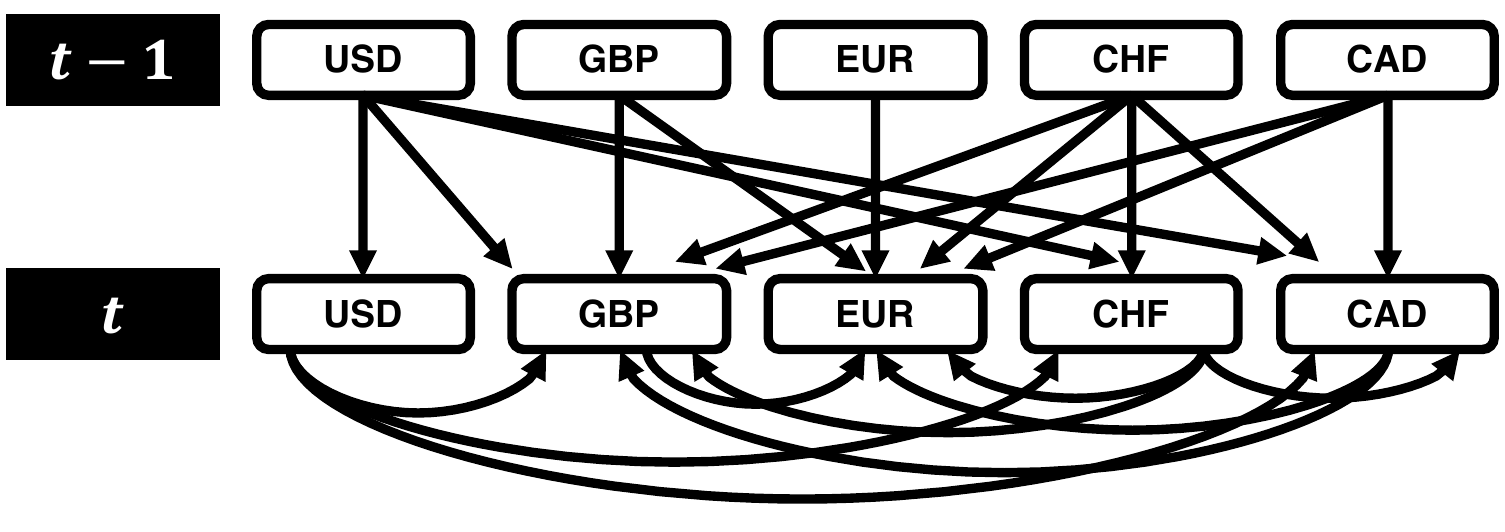}
    \subcaption{The causal graph generated from TS-VarLiNGAM.}
  \end{minipage}
\caption{Causal graphs generated using foreign exchange data.}
\label{figure:realresult}
\end{figure}

Fig.~\ref{figure:realresult} shows the results: (a) The causal graph only with directed edges generated from TS-CAM-UV, (b) the causal graph with edges other than directed edges generated from TS-CAM-UV, (c) the causal graph only with directed edges generated from LPCMCI using ParCorr, (d) the causal graph with edges other than directed edges generated from LPCMCI using ParCorr, (e) the causal graph only with directed edges generated from LPCMCI using GPDC, (f) the causal graph with edges other than directed edges generated from LPCMCI using GPDC, and (g) the causal graph only with directed edges generated from VarLiNGAM. The dashed lines in Fig.~\ref{figure:realresult}-(b) show the variable pairs estimated to have UBPs or UCPs.

We do not compare the performance of the methods based on the results because there is no ground truth for the relationships among the variables. We compare TS-CAM-UV with other methods to see what kind and how many variable pairs are connected. Figure~\ref{figure:realresult}-(c) shows that LPCMCI (ParCorr) draws an edge from the variable representing the state at time $t-1$ of each currency to the variable representing the state at time $t$ of the currency (e.g. $X_{t-1}^i\rightarrow X_{t}^i$), but does not draw edges between variables of different currencies. Compared to this, Figure~\ref{figure:realresult}-(a) shows that TS-CAM-UV connects the variables of different currencies with directed edges. This may be due to the fact that ParCorr assumes linear causal relationships. The causal relationship between the previous and current values of the same currency may be linear, while other causal relationships may be nonlinear. Figure~\ref{figure:realresult}-(e) shows LPCMCI (GPDC) connects the variables of different currencies with directed edges. The number of variable pairs connected by LPCMCI (GPDC) is less than the number of variable pairs connected by TS-CAM-UV. This may be due to the fact that LPCMCI is a constraint-based method and cannot distinguish between all graphs with the same set of conditional independence between observed variables.
Figure~\ref{figure:realresult}-(g) shows that VarLiNGAM connects more variable pairs with directed edges than TS-CAM-UV. This may be due to the fact that VarLiNGAM assumes the absence of latent confounders.

To summarize, the TS-CAM-UV algorithm is based on a causal functional model, which enables it to identify the direction of causality in variable pairs that LPCMCI could not orient. Furthermore, by assuming the presence of unobserved variables, it can avoid incorrect orientations, similar to what occurs with VarLiNGAM.

\section{Conclusion}
In this paper, we propose two methods as extensions of CAM-UV: CAM-UV-PK and TS-CAM-UV. The CAM-UV-PK algorithm employs a method that introduces prior knowledge in the form that a certain variable is not a cause of a certain other variable. This is based on the CAM-UV algorithm, which infers causal variables for each observed variable. TS-CAM-UV uses time priority as prior knowledge for CAM-UV-PK, indicating that variables occurring later in time cannot be the cause of earlier variables. To the best of our knowledge, this is the first method for time series causal discovery that adopts a causal function model approach assuming the presence of latent confounders. If the data being analyzed satisfy the assumption that the causal function takes the form of a generalized additive model, then this proposed method can accurately infers causal relationships even in the presence of latent confounders.

Future research will extend our approach to models where the causal graph contains cycles. If the time for the causal effect from the cause variable to the effect variable is shorter than the time slice of the data being analyzed, this causal effect becomes a contemporaneous effect. When there is a causal relationship such as $X_{i}^{t-2}\rightarrow X_{j}^{t-1}\rightarrow X_{i}^{t}$, and the time slice of the data is longer than this causal effect, it results in a contemporaneous effect with cycles. Therefore, future research explore causal discovery methods that allow for cycles in contemporaneous effects.

%\begin{acknowledgements}
%If you'd like to thank anyone, place your comments here
%and remove the percent signs.
%\end{acknowledgements}

% Authors must disclose all relationships or interests that 
% could have direct or potential influence or impart bias on 
% the work: 
%
% \section*{Conflict of interest}
%
% The authors declare that they have no conflict of interest.

% BibTeX users please use one of
%\bibliographystyle{spbasic}      % basic style, author-year citations
%\bibliographystyle{spmpsci}      % mathematics and physical sciences
%\bibliographystyle{spphys}       % APS-like style for physics
%\bibliography{}   % name your BibTeX data base

% Non-BibTeX users please use
%\begin{thebibliography}{}

%
% and use \bibitem to create references. Consult the Instructions
% for authors for reference list style.
%
%\bibitem{RefJ}
%% Format for Journal Reference
%Author, Article title, Journal, Volume, page numbers (year)
%% Format for books
%\bibitem{RefB}
%Author, Book title, page numbers. Publisher, place (year)
% etc
%\end{thebibliography}

\bibliographystyle{acm}
\bibliography{ref.bib}

\begin{thebibliography}{10}

\bibitem{buhlmann2014}
{\sc B{\"u}hlmann, P., Peters, J., and Ernest, J.}
\newblock {CAM: Causal additive models, high-dimensional order search and
  penalized regression}.
\newblock {\em Annals of Statistics 42}, 6 (2014), 2526--2556.

\bibitem{ijcai2019p223}
{\sc Cai, R., Qiao, J., Zhang, K., Zhang, Z., and Hao, Z.}
\newblock Causal discovery with cascade nonlinear additive noise model.
\newblock In {\em Proceedings of the Twenty-Eighth International Joint
  Conference on Artificial Intelligence, {IJCAI-19}\/} (7 2019), International
  Joint Conferences on Artificial Intelligence Organization, pp.~1609--1615.

\bibitem{chickering2002}
{\sc Chickering, D.~M.}
\newblock Optimal structure identification with greedy search.
\newblock {\em Journal of Machine Learning Research 3}, Nov (2002), 507--554.

\bibitem{JMLR:v9:chu08a}
{\sc Chu, T., and Glymour, C.}
\newblock Search for additive nonlinear time series causal models.
\newblock {\em Journal of Machine Learning Research 9}, 32 (2008), 967--991.

\bibitem{ebert2012}
{\sc Ebert-Uphoff, I., and Deng, Y.}
\newblock Causal discovery for climate research using graphical models.
\newblock {\em Journal of Climate 25}, 17 (2012), 5648 -- 5665.

\bibitem{entner2010causal}
{\sc Entner, D., and Hoyer, P.~O.}
\newblock On causal discovery from time series data using {FCI}.
\newblock In {\em Proceedings of the 5th European Workshop on Probabilistic
  Graphical Models\/} (2010).

\bibitem{NEURIPS2020_94e70705}
{\sc Gerhardus, A., and Runge, J.}
\newblock High-recall causal discovery for autocorrelated time series with
  latent confounders.
\newblock In {\em Advances in Neural Information Processing Systems\/} (2020),
  H.~Larochelle, M.~Ranzato, R.~Hadsell, M.~Balcan, and H.~Lin, Eds., vol.~33,
  Curran Associates, Inc., pp.~12615--12625.

\bibitem{hastie1990generalized}
{\sc Hastie, T.~J., and Tibshirani, R.~J.}
\newblock {\em Generalized additive models}.
\newblock Chapman and Hall/CRC, 1990.

\bibitem{JMLR:v11:hyvarinen10a}
{\sc Hyv{{\"a}}rinen, A., Zhang, K., Shimizu, S., and Hoyer, P.~O.}
\newblock Estimation of a structural vector autoregression model using
  non-gaussianity.
\newblock {\em Journal of Machine Learning Research 11}, 56 (2010), 1709--1731.

\bibitem{LAI2015173}
{\sc Lai, P.-C., and Bessler, D.~A.}
\newblock Price discovery between carbonated soft drink manufacturers and
  retailers: A disaggregate analysis with {PC} and {LiNGAM} algorithms.
\newblock {\em Journal of Applied Economics 18}, 1 (2015), 173--197.

\bibitem{pmlr-v161-maeda21a}
{\sc Maeda, T.~N., and Shimizu, S.}
\newblock Causal additive models with unobserved variables.
\newblock In {\em Proceedings of the Thirty-Seventh Conference on Uncertainty
  in Artificial Intelligence\/} (2021), C.~de~Campos and M.~H. Maathuis, Eds.,
  vol.~161 of {\em Proceedings of Machine Learning Research}, PMLR,
  pp.~97--106.

\bibitem{maeda2021discovery}
{\sc Maeda, T.~N., and Shimizu, S.}
\newblock Discovery of causal additive models in the presence of unobserved
  variables.
\newblock {\em arXiv preprint arXiv: 2106.02234\/} (2021).

\bibitem{pmlr-v92-malinsky18a}
{\sc Malinsky, D., and Spirtes, P.}
\newblock Causal structure learning from multivariate time series in settings
  with unmeasured confounding.
\newblock In {\em Proceedings of 2018 ACM SIGKDD Workshop on Causal
  Disocvery\/} (2018), T.~D. Le, K.~Zhang, E.~Kıcıman, A.~Hyv{\"a}rinen, and
  L.~Liu, Eds., vol.~92 of {\em Proceedings of Machine Learning Research},
  PMLR, pp.~23--47.

\bibitem{NIPS2013_47d1e990}
{\sc Peters, J., Janzing, D., and Sch\"{o}lkopf, B.}
\newblock Causal inference on time series using restricted structural equation
  models.
\newblock In {\em Advances in Neural Information Processing Systems\/} (2013),
  C.~Burges, L.~Bottou, M.~Welling, Z.~Ghahramani, and K.~Weinberger, Eds.,
  vol.~26, Curran Associates, Inc.

\bibitem{peters2014}
{\sc Peters, J., Mooij, J.~M., Janzing, D., and Sch{\"o}lkopf, B.}
\newblock Causal discovery with continuous additive noise models.
\newblock {\em The Journal of Machine Learning Research 15}, 1 (2014),
  2009--2053.

\bibitem{doi:10.1126/sciadv.aau4996}
{\sc Runge, J., Nowack, P., Kretschmer, M., Flaxman, S., and Sejdinovic, D.}
\newblock Detecting and quantifying causal associations in large nonlinear time
  series datasets.
\newblock {\em Science Advances 5}, 11 (2019).

\bibitem{shimizu2006}
{\sc Shimizu, S., Hoyer, P.~O., Hyv{\"a}rinen, A., and Kerminen, A.}
\newblock A linear {non-Gaussian} acyclic model for causal discovery.
\newblock {\em Journal of Machine Learning Research 7}, Oct (2006), 2003--2030.

\bibitem{shimizu2011}
{\sc Shimizu, S., Inazumi, T., Sogawa, Y., Hyv{\"a}rinen, A., Kawahara, Y.,
  Washio, T., Hoyer, P.~O., and Bollen, K.}
\newblock {DirectLiNGAM:} a direct method for learning a linear {non-Gaussian}
  structural equation model.
\newblock {\em Journal of Machine Learning Research 12}, Apr (2011),
  1225--1248.

\bibitem{SMITH2011875}
{\sc Smith, S.~M., Miller, K.~L., Salimi-Khorshidi, G., Webster, M., Beckmann,
  C.~F., Nichols, T.~E., Ramsey, J.~D., and Woolrich, M.~W.}
\newblock Network modelling methods for {FMRI}.
\newblock {\em NeuroImage 54}, 2 (2011), 875--891.

\bibitem{Spirtes91}
{\sc Spirtes, P., and Glymour, C.}
\newblock An algorithm for fast recovery of sparse causal graphs.
\newblock {\em Social Science Computer Review 9}, 1 (1991), 62--72.

\bibitem{fci}
{\sc Spirtes, P., Meek, C., and Richardson, T.}
\newblock Causal discovery in the presence of latent variables and selection
  bias.
\newblock In {\em Computation, Causality, and Discovery}, G.~F. Cooper and
  C.~N. Glymour, Eds. AAAI Press, 1999, pp.~211--252.

\bibitem{NEURIPS2018_e347c514}
{\sc Zheng, X., Aragam, B., Ravikumar, P.~K., and Xing, E.~P.}
\newblock {DAGs} with {NO TEARS}: Continuous optimization for structure
  learning.
\newblock In {\em Advances in Neural Information Processing Systems\/} (2018),
  S.~Bengio, H.~Wallach, H.~Larochelle, K.~Grauman, N.~Cesa-Bianchi, and
  R.~Garnett, Eds., vol.~31, Curran Associates, Inc.

\end{thebibliography}

\end{document}